\newcommand{\RNum}[1]{\uppercase\expandafter{\romannumeral #1\relax}}
\theoremstyle{plain}
\newtheorem{theorem}{Theorem}[section]
\newtheorem{lemma}[theorem]{Lemma}
\theoremstyle{definition}
\newtheorem{definition}[theorem]{Definition}
\theoremstyle{remark}
\icmltitlerunning{Projection-Free Online Convex Optimization with Time-Varying Constraints}
\begin{document}

\newcommand{\ben}[1]{\textcolor{violet}{\{BK: #1\}}}
\newcommand{\brac}[1]{\left(#1\right)}
\newcommand{\enorm}[1]{\left\Vert#1\right\Vert}
\newcommand{\matnorm}[2]{\left\Vert#1\right\Vert_{#2}}
\newcommand{\ceil}[1]{\left\lceil#1\right\rceil}

\def\vz{{\textbf{0}}}
\def\g{{\mathbf{g}}}
\def\m{{\mathbf{m}}}
\def\r{{\mathbf{r}}}
\def\x{{\mathbf{x}}}
\def\e{{\mathbf{e}}}
\def\a{{\mathbf{a}}}
\def\u{{\mathbf{u}}}
\def\v{{\mathbf{v}}}
\def\z{{\mathbf{z}}}
\def\w{{\mathbf{w}}}
\def\y{{\mathbf{y}}}
\def\p{{\mathbf{p}}}
\def\b{{\mathbf{b}}}
\def\X{{\mathbf{X}}}
\def\Y{{\mathbf{Y}}}
\def\A{{\mathbf{A}}}
\def\M{{\mathbf{M}}}
\def\I{{\mathbf{I}}}
\def\B{{\mathbf{B}}}
\def\C{{\mathbf{C}}}
\def\V{{\mathbf{V}}}
\def\Z{{\mathbf{Z}}}
\def\W{{\mathbf{W}}}
\def\U{{\mathbf{U}}}
\def\Q{{\mathbf{Q}}}
\def\P{{\mathbf{P}}}
\def\H{{\mathbf{H}}}
\def\S{{\mathbf{S}}}
\def\bSigma{{\mathbf{\Sigma}}}
\def\EV{{\mathbf{EV}}}

\def\matE{{\mathbf{E}}}

\newcommand{\mM}{\mathcal{M}}
\newcommand{\mT}{\mathcal{T}}
\newcommand{\mW}{\mathcal{W}}
\newcommand{\mX}{\mathcal{X}}
\newcommand{\mA}{\mathcal{A}}
\newcommand{\R}{\mathcal{R}}
\newcommand{\mP}{\mathcal{P}}
\newcommand{\mF}{\mathcal{F}}
\newcommand{\mE}{\mathcal{E}}
\newcommand{\mC}{\mathcal{C}}
\newcommand{\mK}{\mathcal{K}}
\newcommand{\mS}{\mathcal{S}}
\newcommand{\ball}{\mathcal{B}}
\newcommand{\mbS}{\mathbb{S}}
\newcommand{\mD}{\mathcal{D}}
\newcommand{\E}{\mathbb{E}}
\newcommand{\Id}{\textbf{I}}

\newcommand{\pinv}{\dagger}

\newcommand{\gap}{\textrm{gap}}
\newcommand{\dist}{\textrm{dist}}
\newcommand{\diag}{\textbf{diag}}
\newcommand{\nnz}{\textrm{nnz}}
\newcommand{\trace}{\textrm{Tr}}
\newcommand{\rank}{\textrm{rank}}
\newcommand{\reals}{\mathbb{R}}

\newcommand{\supp}{\mathop{\mbox{\rm supp}}}

\newcommand{\oraclep}{\mathcal{O}_{\mP}}
\newcommand{\oraclek}{\mathcal{O}_{\mK}}

\twocolumn[
\icmltitle{Projection-Free Online Convex Optimization with Time-Varying Constraints}




\begin{icmlauthorlist}
\icmlauthor{Dan Garber}{yyy}
\icmlauthor{Ben Kretzu}{yyy}
\end{icmlauthorlist}

\icmlaffiliation{yyy}{Faculty of Data and Decision Sciences, Technion - Israel Institute of Technology, Haifa, Israel}

\icmlcorrespondingauthor{Ben Kretzu}{benkretzu@campus.technion.ac.il}
\icmlcorrespondingauthor{Dan Garber}{dangar@technion.ac.il}

\icmlkeywords{Machine Learning, ICML}

\vskip 0.3in
]



\printAffiliationsAndNotice{}  

\begin{abstract}
We consider the setting of online convex optimization with adversarial time-varying constraints in which actions must be feasible w.r.t. a fixed constraint set, and are also required on average to approximately satisfy additional time-varying constraints. Motivated by scenarios in which the fixed feasible set (hard constraint) is difficult to project on, we consider projection-free algorithms that access this set only through a linear optimization oracle (LOO). We present an algorithm that, on a sequence of length $T$ and using overall $T$ calls to the LOO, guarantees $\tilde{O}(T^{3/4})$ regret w.r.t. the losses and $O(T^{7/8})$ constraints violation (ignoring all  quantities except for $T$) . In particular, these bounds hold w.r.t. any interval of the sequence. We also present a more efficient algorithm that requires only first-order oracle access to the soft constraints and achieves similar bounds w.r.t. the entire sequence. We extend the latter to the setting of bandit feedback and obtain similar bounds (as a function of $T$) in expectation.
\end{abstract}

\section{Introduction} \label{sec:intro}
We consider a particular setting of the well studied paradigm for sequential prediction known as online convex optimization (OCO)  \cite{HazanBook, Shalev12}. In (standard) OCO, a decision maker is required to iteratively choose an action --- some point $\x_t$ on each round $t$ (the total number of rounds $T$ is finite and assumed  for simplicity to be known in advanced), which must belong to some fixed (throughout all rounds) feasible convex set $\mK\subset\reals^n$ \footnote{for ease of presentation we consider the underlying space to be $\reals^n$, however any finite-dimensional Euclidean space is suitable}, which we will also assume to be compact (as is often standard). After choosing $\x_t$, a scalar loss  $f_t(\x_t)$ is incurred, where $f_t:\reals^n\rightarrow\reals$ is some arbitrary convex loss function (assumed for sake of analysis to be chosen adversarially). The standard goal is, on the course of the $T$ rounds, to choose feasible actions $\x_1,\dots,\x_T$ such that the regret,  given by the difference $\sum_{t=1}^Tf_t(\x_t) - \min_{\x\in\mK}\sum_{t=1}^Tf_t(\x)$, grows (as a function of $T$) only at a sublinear rate (the slower the better). 

In the particular setting considered in this work --- \textit{OCO with time-varying constraints}, we assume that besides the \textit{hard and fixed constraint} given by  the set $\mK$ to which all played points must belong, there are additional \textit{soft and time-varying constraints} given by convex functions $g_1,\dots,g_T\in\reals^n\rightarrow\reals$, where  $g_t$ is revealed at the end of round $t$ and encodes the constraint $g_t(\x) \leq 0$. In this setting the standard goal is two folded: I. to guarantee sublinear regret  w.r.t. the loss functions $f_1,\dots,f_T$ against the best action in hindsight in the intersection of the hard constraint and all soft constraints, i.e., to guarantee that 
\begin{align}\label{eq:regret}
\sum_{t=1}^Tf_t(\x_t) - \min_{\x\in\mK:~\g_t(\x)\leq 0 ~\forall t\in[T]}\sum_{t=1}^Tf_t(\x) = o(T),
\end{align}
 and II. to guarantee that the cumulative violation of the soft constraints is also sublinear in $T$, i.e., that 
\begin{align}\label{eq:constraint}
 \sum_{t=1}^T\left\{g_t^+(\x_t):=\max\{g_t(\x_t),0\}\right\} = o(T).  
\end{align} 
Here, $g_t^+(\x) :=\max\{g_t(\x),0\}$  is introduced to prevent a natural undesired phenomenon in which strongly satisfying some constraints can compensate strongly violating others.    

This setting was recently studied in \cite{yi2022regret, NEURIPS2022_ec360cb7, NEURIPS2022_d9b56471, neely2017online, cao2018online}. The state-of-the-art  bounds with full-information (i.e., $f_t(\cdot),g_t(\cdot)$ become known to the algorithm at the end of each round $t$) for this setting are $O(\sqrt{T})$ for the regret (as given in \eqref{eq:regret}) and $O(T^{3/4})$ for the constraint violation (as given in \eqref{eq:constraint}), see for instance \cite{NEURIPS2022_ec360cb7}.

However, previous works require as a sub-routine to solve on each iteration a convex optimization problem that is at least as difficult as computing a Euclidean projection onto the feasible set $\mK$. In high-dimensional settings and when the set $\mK$ admits a non-trivial structure, this highly limits the applicability of the proposed methods. Motivated by this observation, in this work, and to the best of our knowledge for the first time, we consider so-called \textit{projection-free algorithms} for OCO with time-varying constraints. Concretely, motivated by vast work on projection-free methods for OCO in recent years, e.g., \cite{Hazan12, Garber22a, hazan2020faster, Garber23a, mhammedi2021efficient, mhammedi2022}, we consider algorithms that only access the feasible set $\mK$ through a linear optimization oracle (that is an oracle that given a linear function, can find a minimizer of it over $\mK$) and consider online algorithms that throughout the $T$ rounds make at most $T$ calls to this oracle.


\textbf{Example I: online minimum-cost capacitated flow.} 
Consider a fixed directed acyclic graph $G$ with $n$ nodes,  $m$ edges, source node $s$ and target node $e$. A decision maker (DM) must route on each round $t$ a unit  flow from $s$ to $e$, i.e., choose some point $\x_t$ in the corresponding \textit{unit flow polytope}\footnote{this is also the convex-hull of all identifying vectors of paths from node $s$ to node $e$ in the graph} $\mK$. The DM then incurs cost $f_t(\x_t)$. If each edge is associated with a linear cost, we may have $f_t(\x) =\mathbf{f}_t^{\top}\x$ for some $\mathbf{f}_t\in\reals^{m}$. The DM also needs to respect time-varying edge capacity constraints given by $\x_t(i) \leq \mathbf{c}_t(i)$, $i=1,\dots,m$, where $\mathbf{c}_t$ has non-negative entries. Here, the constraint function on time $t$ is simply $g_t(\x) = \max_{i\in[m]}\{\x(i) - \mathbf{c}_t(i)\}$.  The unit flow-polytope 
is difficult to project on, however linear optimization over it corresponds to a simple weighted shortest path computation, which takes linear time using dynamic programming.

\textbf{Example II: online semidefinite optimization.} Consider a sequence of $T$ semidefinite optimization problems over the bounded positive semidefinite (PSD) cone $\mK= \{\X\in\mbS^n~|~\X\succeq 0,~\trace(\X)\leq \tau\}$, where $\mbS^n$ denotes the space of $n\times n$ real symmetric matrices and $\tau >0$. Each instance $t$ in the sequence consist of a convex objective $f_t(\X):\mbS^n\rightarrow\reals$ and a set of $m_t$ linear inequalities $\trace(\A_{t,i}^{\top}\X)\leq b_{t,i}, i\in\{1,\dots,m_t\}$. Here the fixed hard constraints are given by the convex bounded PSD cone and the time-varying soft constraints can be encoded using $g_t(\X) = \max_{i\in[m_t]}\{\trace(\A_{t,i}^{\top}\X)-b_{t,i}\}$. Projecting onto the bounded PSD cone requires a full eigen-decomposition of a $n\times n$ matrix, which in practical implementations requires $O(n^3)$ runtime. Linear optimization over this set amounts to  a single leading eigenvector computation whose runtime scales only with $n^2$, and even faster when the input matrix is sparse, using fast iterative eigenvector methods. 

We believe this setting is well suited to capture important online scenarios with changing constraints: at one extreme, some works as the ones mentioned above, enforce the hard constraint $\mK$ using costly computational procedures (projections) which can be prohibitive, as in the examples above. At the other extreme, for the sake of computational efficiency one can also model membership to $\mK$ as a long term soft constraint (this was the motivation of \cite{mahdavi2012trading} which pioneered the study of OCO with long term constraints), however this might result in a too crude approximation of real-world scenarios. For instance, in Example 1 above it makes sense that even if the capacity constraints could be violated, the decision maker still must choose a feasible flow on each round, or that in Example II the chosen matrix must indeed be PSD. We view our work as an appealing  middle ground between these two extremes: distinguishing between hard and soft (varying) constraints, while insisting on tractable procedures for high-dimensional and complex domains.    

\subsection{Contributions (informally stated)} 
\textbf{I.} We give an algorithm that using no more than $T$ calls to the LOO of $\mK$ throughout all $T$ rounds, guarantees that on each interval $[s,e]$, $1\leq s < e \leq T$, the regret w.r.t. the set $\widehat{\mK}_{s,e} = \{\x\in\mK~|~g_t(\x) \leq 0 ~\forall s\leq t\leq e\}$ is upper-bounded by $\tilde{O}(T^{3/4})$  and the overall non-negative constraint violation on the interval $\sum_{t=s}^eg_t^+(\x_t)$, is upper-bounded by $O(T^{7/8})$ (ignoring all quantities except for $T$ and $n$). Note this metric (bounds w.r.t. any interval)\footnote{such metric is also known as \textit{adpative regret} and was proposed in \cite{hazan2009efficient}} is substantially stronger than the ones in \eqref{eq:regret}, \eqref{eq:constraint}, and in particular is  interesting even when there is no point  $\x\in\mK$ that satisfies all constraints $g_t(\x) \leq 0, t\in[T]$. See Section \ref{sec:d+p}.

\textbf{II.} We give a more efficient algorithm than the latter which only requires first-order access to $g_1,\dots,g_T$, and  provides similar bounds, but only w.r.t. to the entire sequence (i.e., standard  bounds in the sense of  \eqref{eq:regret}, \eqref{eq:constraint}). 
See Section \ref{sec:lagrange}.

\textbf{III.} Finally, we extend the latter algorithm to the  bandit setting in which only the scalar values $f_t(\x_t), g_t(\x_t)$ are observed on each iteration $t$, where $\x_t$ is the played point. The algorithm guarantees $\tilde{O}(\sqrt{n}T^{3/4})$ expected regret 
and expected non-negative constraint violation $\sum_{t=1}^Tg_t^+(\x_t)=\tilde{O}(n^{1/4}T^{7/8})$. See Section \ref{sec:lagrange:bandit}.

\section{Preliminaries}
\subsection{Assumptions and notation}
As stated, we assume that the feasible set $\mK\subset\reals^n$, which encodes the hard constraints (must be satisfied on each round $t$), is convex and compact and we let $R$ be such that $\mK\subseteq{}R\ball$, where $\ball$ denotes the origin-centered unit Euclidean ball. We also assume w.l.o.g. that $\mathbf{0}\in\mK$. We assume all loss and constraint functions $f_1,g_1,\dots,f_T,g_T$ are convex over $R\ball$, and we let $G_f$ and $G_g$ denote upper-bounds on the $\ell_2$ norm of the subgradients of each $f_t, t\in[T]$ and $g_t, t\in[T]$ over the ball $R\ball$, respectively. 

Our online algorithms will consider the $T$ prediction rounds in disjoint blocks (batches) of size $K$, where $K$ is an integer parameter. We assume throughout for simplicity and w.l.o.g. that $T/K$ is an integer. We also use the notation $m(t) :=  \lceil{t/K}\rceil$ to map from some $t\in[T]$ to the corresponding block index $m(t)\in[T/K]$. 

\subsection{Fast approximately-feasible projections via linear optimization}
In order to construct our projection-free algorithms that  rely only on solving linear optimization problems over the fixed convex set $\mK$ (as opposed to projection or even more complex  steps) we rely on the technique recently introduced in \cite{Garber22a, Garber23a} of fast computation of  \textit{approximately-feasible projections} using a LOO.
\begin{definition}[Approximately-feasible Projection Oracle\footnote{\cite{Garber23a} considers arbitrary matrix-induced norms. Here we only consider the Euclidean case which is equivalent to setting $\A=\I$ in their definition.}] \label{def:app_feasible_projection}
Given a convex and compact set $\mK\subset\reals^n$, and a tolerance $\epsilon > 0$, we say a function $\mathcal{O}_{AFP}(\y,\epsilon,\mK)$ is an \textit{approximately-feasible projection (AFP) oracle} (for the set $\mK$ with parameter $\epsilon$), if for any input point $\y\in\reals^n$, it returns some $\brac{\x,\widetilde{\y}}\in\mK\times\reals^n$ such that i.
for all $\z\in\mK$, $\Vert \widetilde{\y} - \z \Vert \leq \Vert \y - \z \Vert $, and ii.
$\Vert{\x-\widetilde{\y}}\Vert^2 \leq \epsilon$.
\end{definition}

Algorithm \ref{alg:CIP-FW} given below is taken from \cite{Garber23a} and implements an AFP oracle for the set $\mK$ using only linear optimization steps over $\mK$. It uses as a sub-routine the well known Frank-Wolfe method (Algorithm \ref{alg:SH-FW}).
 If the point-to-project is far from the set, the output of Algorithm \ref{alg:SH-FW} is used to construct a hyperplane separating the point from $\mK$, which  is then used to ``pull the point closer'' to $\mK$. Otherwise, if the point is already sufficiently close to $\mK$ (but not necessarily in $\mK$),  Algorithm \ref{alg:SH-FW} outputs a point in $\mK$ sufficiently close to it.

\begin{algorithm}
\begin{algorithmic}
\STATE\textbf{Data: } parameters $\y_1\in\reals^n,\x_0\in\mK, \epsilon$
    \IF{$\Vert \x_{0} - \y_{1} \Vert^2 \leq 3\epsilon$}
        \STATE \textbf{return} $\x \gets \x_{0}$, $\y \gets \y_{1}$
    \ENDIF
    \FOR{$i=1,2, \dots$}
    \STATE $\x_{i} \gets$ Output of Algorithm \ref{alg:SH-FW} when called with tolerance $\epsilon$, feasible point $\x_{i-1}$, and  initial point $\y_{i}$\\
    \IF{$\Vert \x_{i} - \y_{i} \Vert^2 > 3\epsilon$}
      \STATE  $\y_{i+1} =  \y_{i} - \frac{2}{3}\left( \y_{i} - \x_{i} \right)$
    \ELSE 
    \STATE \textbf{return} $\x \gets  \x_{i}$, $\y \gets \y_i$
    \ENDIF
    \ENDFOR
\end{algorithmic}
 \caption{Approximately-Feasible Projection via  Linear Optimization Oracle (see \cite{Garber23a})}\label{alg:CIP-FW}
\end{algorithm}
\begin{algorithm}
\begin{algorithmic}
  \STATE \textbf{Data: } {parameters $\x_1\in\mK$, $\y\in\reals^n$, $\epsilon$}
  \FOR{ $i =1,2, \dots$}
        \STATE $ \mathbf{v}_{i} \in \textrm{argmin}_{\x \in \mK} \{ (\x_{i} - \y)^{\top} \x \} $ \COMMENT{call to LOO of $\mK$}
        \IF{$( \x_i - \y )^\top (\x_i -\v_i) \leq \epsilon$ or $\Vert \x_{i} - \y \Vert^2 \leq 3\epsilon$}
	        \STATE \textbf{return} $\widetilde{\x} \gets \x_{i}$
	        \ENDIF
	\STATE    $ \sigma_{i} = \textrm{argmin}_{\sigma \in [0, 1]}  \{ \Vert \y - \x_{i} - \sigma (\mathbf{v}_i - \x_{i})) \Vert^2 \}$
	\STATE $\x_{i+1} = \x_i + \sigma_{i} (\mathbf{v}_i - \x_i) $
    \ENDFOR
\end{algorithmic}
  \caption{Separating Hyperplane via Frank-Wolfe}\label{alg:SH-FW}
\end{algorithm}

The following lemma is  from \cite{Garber23a}.
\begin{lemma} \label{lemma:CIP-FW}
Let $\mK\subset\reals^n$ be convex and compact such that $\mathbf{0}\in\mK\subseteq{}R\ball$ for some $R>0$.
Algorithm \ref{alg:CIP-FW} guarantees that it 
returns $(\x,\y) \in \mK\times \reals^n$ such that the following three conditions hold: 
   I. $\forall \z \in \mK :$ $\enorm{ \y - \z }^2 \leq  \enorm{ \y_{1} - \z }^2$, 
    II. $\enorm{ \x - \y }^2 \leq 3\epsilon$, and III. $\Vert{\y}\Vert \leq \Vert{\y_1}\Vert$.
The overall number of calls to the LOO of $\mK$ (in Algorithm \ref{alg:SH-FW}) throughout the run of Algorithm \ref{alg:CIP-FW} is upper-bounded by
$\frac{27 R^2}{\epsilon}\max \left\{2.25\log\brac{ \frac{\enorm{ \y_{1} -\x_{0}}^2 }{ \epsilon} }+1, 0 \right\}$. 
\end{lemma}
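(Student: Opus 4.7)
The plan is to decompose the analysis into two parts: (a) structural guarantees for a single outer iteration of Algorithm~\ref{alg:CIP-FW}, which together with a simple induction yield properties I--III, and (b) a quantitative bound on the number of outer iterations and on the inner LOO calls per outer iteration.

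First I would unpack Algorithm~\ref{alg:SH-FW}: it is precisely the Frank--Wolfe method applied to minimizing $f(\x) = \tfrac{1}{2}\|\x - \y\|^2$ over $\mK$, with $\v_i$ the FW vertex and $\sigma_i$ an exact line search. Standard FW analysis of this $1$-smooth objective over a set of diameter at most $2R$ shows that within $O(R^2/\epsilon)$ LOO calls one of the two stopping conditions must trigger: either $\|\x_i - \y\|^2 \leq 3\epsilon$, or the FW duality gap $(\x_i-\y)^\top(\x_i-\v_i)\leq\epsilon$ is attained. In the latter case, since $\v_i$ minimizes the linear form $\x\mapsto(\x_i-\y)^\top\x$ over $\mK$, we obtain the ``separating hyperplane'' estimate $(\x_i-\y)^\top(\x_i-\z)\leq(\x_i-\y)^\top(\x_i-\v_i)\leq\epsilon$ for every $\z\in\mK$. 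Now consider any outer iteration of Algorithm~\ref{alg:CIP-FW} that does not terminate: the condition $\|\x_i-\y_i\|^2>3\epsilon$ rules out the first stopping case in Algorithm~\ref{alg:SH-FW}, so the separating-hyperplane estimate with $\y=\y_i$ is in force. Expanding
\[
\|\y_{i+1}-\z\|^2 - \|\y_i-\z\|^2 = -\tfrac{4}{3}(\y_i-\z)^\top(\y_i-\x_i) + \tfrac{4}{9}\|\y_i-\x_i\|^2
\]
and using $(\y_i-\z)^\top(\y_i-\x_i)=\|\y_i-\x_i\|^2-(\x_i-\y_i)^\top(\x_i-\z)\geq\|\y_i-\x_i\|^2-\epsilon$, the right-hand side is at most $-\tfrac{8}{9}\|\y_i-\x_i\|^2+\tfrac{4}{3}\epsilon$, which is $\leq 0$ because $\|\y_i-\x_i\|^2 > 3\epsilon$. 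Chaining this inequality across outer iterations yields property~I; property~II is immediate from the termination test; property~III follows from property~I by taking $\z=\mathbf{0}\in\mK$.

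For the LOO count, the main obstacle is to show that the outer loop terminates in $O(\log(\|\y_1-\x_0\|^2/\epsilon))$ iterations. I would track the potential $\Phi_i := \dist(\y_i,\mK)^2$. Since $\x_i\in\mK$ and $\y_{i+1}=\tfrac{1}{3}\y_i+\tfrac{2}{3}\x_i$, we immediately get $\Phi_{i+1}\leq\|\y_{i+1}-\x_i\|^2 = \tfrac{1}{9}\|\y_i-\x_i\|^2$; and the FW duality-gap bound combined with convex duality applied to $\tfrac{1}{2}\|\cdot-\y_i\|^2$ yields $\|\y_i-\x_i\|^2 \leq \Phi_i + 2\epsilon$. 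Combining gives the contraction $\Phi_{i+1}\leq\tfrac{1}{9}\Phi_i+\tfrac{2}{9}\epsilon$, equivalently $\Phi_i-\tfrac{\epsilon}{4}\leq 9^{-(i-1)}(\Phi_1-\tfrac{\epsilon}{4})$; once $\Phi_i\leq\epsilon$ the outer test $\|\x_i-\y_i\|^2\leq 3\epsilon$ fires. Since $\Phi_1\leq\|\y_1-\x_0\|^2$ this gives the logarithmic outer-iteration count, and multiplying by the $O(R^2/\epsilon)$ LOO budget per call to Algorithm~\ref{alg:SH-FW} produces a total bound of the form $O((R^2/\epsilon)\log(\|\y_1-\x_0\|^2/\epsilon))$. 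I expect the most tedious step to be tracking constants precisely to reproduce the numerical factors $27$ and $2.25$ in the statement; the qualitative rate is already pinned down by the contraction above.
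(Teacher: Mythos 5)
The paper does not actually prove this lemma --- it is imported verbatim from \cite{Garber23a} --- but your reconstruction follows exactly the argument used there: the Frank--Wolfe gap certificate yields the inequality $(\x_i-\y_i)^\top(\x_i-\z)\leq\epsilon$ for all $\z\in\mK$, which makes each pull step $\y_{i+1}=\tfrac{1}{3}\y_i+\tfrac{2}{3}\x_i$ a non-expansion toward every point of $\mK$ whenever $\enorm{\x_i-\y_i}^2>3\epsilon$ (giving I, II, and III via $\z=\mathbf{0}$), while the $1/9$-contraction of $\dist(\y_i,\mK)^2$ combined with $\enorm{\x_i-\y_i}^2\leq\dist(\y_i,\mK)^2+2\epsilon$ gives the logarithmic outer-iteration count. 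The only piece you leave open is the bookkeeping behind the constants $27$ and $2.25$ (and the $\max\{\cdot,0\}$ covering the zero-call early exit), which just requires carrying the explicit Frank--Wolfe rate for the $1$-smooth objective through the product of the inner and outer iteration counts; the structure of your argument is sound.
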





\section{Drift-plus-Penalty-inspired  Algorithm with Adaptive Regret Guarantees}\label{sec:d+p}
Our first online algorithm, Algorithm \ref{alg:d+p}, is inspired by the \textit{drift plus penalty} approach used extensively in the literature on OCO with constraints, see for instance \cite{NEURIPS2022_ec360cb7, yi2021regret, yi2022regret, JMLR:v21:16-494}. The algorithm considers the $T$ rounds  in disjoint blocks of fixed size $K$, where $K$ is a parameter of the algorithm. At the end of each block $m\in[T/K]$, the algorithm performs its two key steps: The first step is to find a minimizer of a  \textit{drift-plus-penalty}-style objective function (the function $h_m$ defined in the algorithm), involving the gradients (i.e., linearizations) of the loss functions  and the constraints functions (without linearization) observed during the $K$ iterations of the block, over a Euclidean ball enclosing the set $\mK$ (note this operation does not involve any optimization over the set $\mK$)\footnote{We note that our \textit{drift-plus-penalty}-style objectives are simpler versions of those used in previous works such as  \cite{NEURIPS2022_ec360cb7, yi2021regret, yi2022regret, JMLR:v21:16-494} since the penalty multiplying the constraint function $G^+_m$ is fixed throughout the run.}. The second step is to feed to computed minimizer into the approximately-feasible projection oracle --- Algorithm \ref{alg:CIP-FW}, to obtain the next state of the algorithm.
\begin{algorithm}
\begin{algorithmic}
\STATE \textbf{Data: }{parameters $T$, $K$, $\epsilon$, $\delta$, $\alpha$}
\STATE $\x_1=\widetilde{\y}_1 \gets $ arbitrary point in $\mK$
\FOR{$~ m = 1,\ldots,T/K ~$}
\FOR{$~ t = (m-1)K+1,\ldots,mK ~$}
    \STATE Play $\x_{m} $ and observe $f_{t}(\cdot), g_t(\cdot)$ 
    \STATE Set $\nabla_{t} \in \partial f_t(\widetilde{\y}_{m})$
\ENDFOR
    \STATE Denote $\mathcal{T}_m = \{ (m-1)K +1 , \dots, mK \}$
    \STATE $\bar{\nabla}_m \gets \frac{1}{K} \sum_{t=\mathcal{T}_m} \nabla_t$
    \STATE $G_m^+(\x) \gets \delta \sum_{t\in\mathcal{T}_m} g_t^+(\x)$
    \STATE  $\y_{m+1} \gets \textrm{argmin}_{\x \in R\ball}    \{h_m(\x) := \bar{\nabla}_m^\top \brac{\x -\widetilde{\y}_m} + G_m^+(\x) + \frac{\alpha}{2} \enorm{\x - \widetilde{\y}_{m}}^2\}$
    \STATE  $\brac{\x_{m+1},\widetilde{\y}_{m+1}}\gets \mathcal{O}_{AFP}\brac{\y_{m+1},\x_m,\epsilon,\mK}$ \COMMENT{Alg. \ref{alg:CIP-FW}}
\ENDFOR
\end{algorithmic}
\caption{LOO-based Drift-plus-Penalty Method } \label{alg:d+p}
\end{algorithm}


\begin{theorem}\label{thm:d+p}   
Consider running Algorithm \ref{alg:d+p} with parameters $\delta = K = T^\frac{1}{2}$ $, \alpha = 
\frac{G_f}{R} T^\frac{1}{4}, \epsilon = 61 R^2 T^{-\frac{1}{2}} \log{ T } $. Fix an interval $[s,e], 1\leq s < e \leq T$. If $\widehat{\mK}_{s,e} := \{ 
\x \in \mK : g_t(\x) \leq 0, \forall t\in [s,e]\} \neq \emptyset $ then the regret  on the interval is upper-bounded by
\begin{align*}
        \sum_{t=s}^{e} f_t(\x_{m(t)}) - \min_{\x \in \widehat{\mK}_{s,e}} \sum_{t=s}^{e} f_t(\x) = O\left({R G_f T^\frac{3}{4} \sqrt{\log{ T }}}\right),
\end{align*}
and the constraints violation on the interval is upper bounded by
    $\sum_{t=s}^{e} g_t^+(\x_{m(t)})  =  O\left({RG_gT^\frac{7}{8} }\right)$.
The overall number of calls to the LOO of $\mK$ is upper-bounded by $T$. 
\end{theorem}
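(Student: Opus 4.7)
My plan is to derive a single per-block master inequality from the strong convexity of $h_m$, telescope the distance terms using the contraction property of the AFP oracle, and then extract the regret and constraint-violation bounds as two different readings of the same inequality.

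\textbf{Step 1 (master inequality).} Since $h_m$ is $\alpha$-strongly convex and $\y_{m+1}$ minimizes it over $R\ball \supseteq \mK$, for every $\z \in \mK$ I have $h_m(\z) \geq h_m(\y_{m+1}) + \tfrac{\alpha}{2}\|\z - \y_{m+1}\|^2$, which after expanding the definition of $h_m$ and rearranging becomes
\[
G_m^+(\y_{m+1}) + \tfrac{\alpha}{2}\|\y_{m+1} - \widetilde{\y}_m\|^2 + \tfrac{\alpha}{2}\|\z - \y_{m+1}\|^2 \leq \bar{\nabla}_m^\top(\z - \y_{m+1}) + G_m^+(\z) + \tfrac{\alpha}{2}\|\z - \widetilde{\y}_m\|^2.
\]
Item I of Lemma \ref{lemma:CIP-FW} gives $\|\widetilde{\y}_{m+1} - \z\| \leq \|\y_{m+1} - \z\|$, which allows me to upper-bound $-\|\z-\y_{m+1}\|^2$ by $-\|\z - \widetilde{\y}_{m+1}\|^2$ so that summing the inequality over consecutive blocks makes the quadratic distance terms telescope and contribute at most $2\alpha R^2$.

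\textbf{Step 2 (regret on the interval).} I convert actual loss into linearized loss by the subgradient inequality at $\widetilde{\y}_m$, paying an additive $KG_f\sqrt{3\epsilon}$ per block (using Lipschitzness of $f_t$ together with the AFP guarantee $\|\x_m - \widetilde{\y}_m\|^2 \leq 3\epsilon$ from item II of Lemma \ref{lemma:CIP-FW}). Then I split $K\bar{\nabla}_m^\top(\widetilde{\y}_m - \z) = K\bar{\nabla}_m^\top(\widetilde{\y}_m - \y_{m+1}) + K\bar{\nabla}_m^\top(\y_{m+1} - \z)$; the first summand I bound by Young's inequality $\leq \tfrac{KG_f^2}{2\alpha} + \tfrac{K\alpha}{2}\|\widetilde{\y}_m - \y_{m+1}\|^2$, where the quadratic piece exactly cancels the matching term from the master inequality, and the second summand I bound by the master inequality itself. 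Taking $\z \in \widehat{\mK}_{s,e}$ kills $G_m^+(\z)$ for every block fully contained in $[s,e]$ and lets me discard the nonpositive $-G_m^+(\y_{m+1})$ term; the at-most-two boundary blocks are absorbed via the crude estimate $|f_t(\x_m)-f_t(\z)|\leq 2RG_f$ into an $O(RG_f K)$ correction. Summing and using the telescoping from Step 1 gives $O(TG_f^2/\alpha + K\alpha R^2 + TG_f\sqrt{\epsilon} + RG_f K)$, which under the stated parameters matches $O(RG_f T^{3/4}\sqrt{\log T})$.

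\textbf{Step 3 (constraint violation on the interval).} I expect this to be the main technical obstacle due to a timing mismatch: the constraints entering $h_m$ (and hence bounded by the master inequality) are the block-$m$ ones, whereas the action played during block $m$ is $\x_m$, chosen \emph{before} those constraints are observed. I plan to interpolate through $\widetilde{\y}_m$ and $\y_{m+1}$ using $G_g$-Lipschitzness of $g_t^+$,
\[
g_t^+(\x_m) \leq g_t^+(\y_{m+1}) + G_g\|\y_{m+1} - \widetilde{\y}_m\| + G_g\sqrt{3\epsilon},
\]
so that $\delta\sum_{t\in\mathcal{T}_m} g_t^+(\x_m) \leq G_m^+(\y_{m+1}) + \delta K G_g(\|\y_{m+1}-\widetilde{\y}_m\| + \sqrt{3\epsilon})$. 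Plugging $\z \in \widehat{\mK}_{s,e}$ into the master inequality kills $G_m^+(\z)$; bounding the linear term crudely by Cauchy--Schwarz $\bar{\nabla}_m^\top(\z - \y_{m+1}) \leq 2RG_f$ and summing over blocks in $[m(s),m(e)]$ yields \emph{simultaneously} $\sum_m G_m^+(\y_{m+1}) \leq 2RG_f N + 2\alpha R^2$ and $\sum_m \|\y_{m+1} - \widetilde{\y}_m\|^2 \leq (4RG_f N + 4\alpha R^2)/\alpha$. A Cauchy--Schwarz $\sum_m \|\y_{m+1} - \widetilde{\y}_m\| \leq \sqrt{N\sum_m \|\cdot\|^2}$ then controls the drift sum, and under $K=\delta = T^{1/2}$, $\alpha = G_f T^{1/4}/R$, $\epsilon = \Theta(R^2 T^{-1/2}\log T)$ this produces the claimed $O(RG_g T^{7/8})$ bound; the $T^{7/8}$ exponent (as opposed to the $T^{3/4}$ governing the regret) arises precisely from this Cauchy--Schwarz on the drift terms.

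\textbf{Step 4 (LOO budget).} Lemma \ref{lemma:CIP-FW} bounds each AFP invocation by $O((R^2/\epsilon)\log(R^2/\epsilon))$ LOO calls; since the algorithm makes $T/K$ AFP calls, the constant $61$ in $\epsilon = 61 R^2 T^{-1/2}\log T$ is tuned precisely so that the product is at most $T$.
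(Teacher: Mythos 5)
Your proposal is correct and follows essentially the same route as the paper's proof: the same strong-convexity master inequality for $h_m$, the same telescoping via item I of Lemma \ref{lemma:CIP-FW}, the same $\sqrt{3\epsilon}$ conversion between $\x_m$ and $\widetilde{\y}_m$, and the same simultaneous extraction of bounds on $\sum_m G_m^+(\y_{m+1})$ and $\sum_m\|\y_{m+1}-\widetilde{\y}_m\|^2$ from one inequality. The only (equivalent) cosmetic difference is that you get the $T^{7/8}$ drift term via Cauchy--Schwarz over the $T/K$ blocks whereas the paper applies $G_g\|\cdot\|\le \tfrac{G_g^2}{2c}+\tfrac{c}{2}\|\cdot\|^2$ per step with $c=\Theta(T^{1/8})$; just make sure in Step 3 to restrict the telescoping to blocks fully contained in $[s,e]$ and absorb the at-most-$2K$ boundary steps with the crude $2RG_g$ estimate, exactly as you did for the regret in Step 2.
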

Note Theorem \ref{thm:d+p} in particular provides meaningful bounds  in the important scenario in which no single point in $\mK$ satsfies all the soft constraints given by $g_1,\dots,g_T$.

Relying only on a LOO (with an overall budget of $T$ calls) comes with a price: the bounds in Theorem \ref{thm:d+p} could be compared to the $O(\sqrt{T})$ regret and $O(T^{3/4})$ constraint violation achievable by methods with \textit{unrestricted optimizations}, see for instance \cite{NEURIPS2022_ec360cb7}. This deterioration in rates is expected and is a  known phenomena in projection-free methods for OCO (without soft constraints), e.g., \cite{Hazan12, Garber22a}. 

For  the proof of the theorem we need the following  lemma.
\begin{lemma}\label{lemma:d+p_afp_regret}
   Consider the (infeasible) sequence  $\{\widetilde{\y}_{m(t)}\}_{t\in[T]}$ produced by Algorithm \ref{alg:d+p}. Fix an interval
    $[s,e], 1\leq s < e \leq T$. If $\widehat{\mK}_{s,e}  \neq \emptyset$ then the regret of   $\{\widetilde{\y}_{m(t)}\}_{t\in[T]}$  is upper-bounded by:
    \begin{align*}
        \hspace{-3pt}\sum_{t=s}^{e} \hspace{-1pt}f_t(\widetilde{\y}_{m(t)}) \hspace{-1pt}-   \hspace{-5pt}\min_{\x \in \widehat{\mK}_{s,e}} \hspace{-2pt}\sum_{t=s}^{e}\hspace{-1pt} f_t(\x) \hspace{-1pt}\leq \hspace{-1pt} 2 KR(2G_f  \hspace{-1pt} +\hspace{-1pt}  R \alpha) \hspace{-2pt}+\hspace{-2pt}  \frac{ G_f^2}{2 \alpha}T,
    \end{align*}
    and for every $c>0$,  the constraints violation of the sequence on the interval is upper-bounded by:
    \begin{align*}
        \sum_{t=s}^{e} g_t^+(\widetilde{\y}_{m(t)})  &\leq \frac{G_g^2 }{2 c} T + 2cR \brac{\frac{ G_f T }{\alpha} + R K}  \\
        &+ \frac{2 \alpha R^2}{\delta}  +   \brac{ \frac{ G_f }{ \alpha }  +  4 R  } \frac{G_f T}{2 \delta K} + 4KRG_g.
    \end{align*}
\end{lemma}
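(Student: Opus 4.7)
The plan is to derive a single master inequality from the $\alpha$-strong convexity of the subproblem defining $\y_{m+1}$, and then to specialize it twice: once to bound the per-block linearized regret, and once to bound the cumulative scaled penalty $\tfrac{1}{\delta}\sum_m G_m^+(\y_{m+1})$. Throughout, the contraction property (I) of Lemma \ref{lemma:CIP-FW}, $\enorm{\widetilde{\y}_{m+1}-\x^*}\leq\enorm{\y_{m+1}-\x^*}$ for $\x^*\in\mK$, is the device that turns the optimization over $\y$-iterates into telescoping distances on the $\widetilde{\y}$-iterates that the algorithm actually tracks. Concretely, since $\y_{m+1}$ is the unconstrained minimizer of the $\alpha$-strongly convex $h_m$ on $R\ball$, any $\x^*\in\widehat{\mK}_{s,e}\subseteq R\ball$ is admissible and yields, after invoking (I) on the right,
\begin{align*}
\bar{\nabla}_m^\top(\y_{m+1}-\x^*) &+ G_m^+(\y_{m+1}) - G_m^+(\x^*) \\
&\leq \tfrac{\alpha}{2}\brac{\enorm{\x^*-\widetilde{\y}_m}^2 - \enorm{\y_{m+1}-\widetilde{\y}_m}^2 - \enorm{\widetilde{\y}_{m+1}-\x^*}^2}.
\end{align*}

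For the regret I would fix $\x^*\in\widehat{\mK}_{s,e}$, so that $G_m^+(\x^*)=0$ for every block $m$ whose window $\mathcal{T}_m$ lies entirely inside $[s,e]$. Writing $\bar{\nabla}_m^\top(\y_{m+1}-\x^*)=\bar{\nabla}_m^\top(\y_{m+1}-\widetilde{\y}_m)+\bar{\nabla}_m^\top(\widetilde{\y}_m-\x^*)$ and applying Young's inequality to the first summand makes the resulting $\tfrac{\alpha}{2}\enorm{\y_{m+1}-\widetilde{\y}_m}^2$ cancel its twin in the master inequality. Multiplying by $K$ and using convexity of $f_t$ at $\widetilde{\y}_m$ produces, after dropping the non-negative $KG_m^+(\y_{m+1})$, a per-block surplus bounded by $K\tfrac{G_f^2}{2\alpha}+\tfrac{K\alpha}{2}(\enorm{\x^*-\widetilde{\y}_m}^2-\enorm{\x^*-\widetilde{\y}_{m+1}}^2)$. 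Summing over interior blocks telescopes the squared distances into at most $2KR^2\alpha$; the at-most-two boundary blocks containing $s$ and $e$ are charged the trivial per-round bound $f_t(\widetilde{\y}_m)-f_t(\x^*)\leq 2RG_f$, contributing $4KRG_f$; together with $\tfrac{G_f^2T}{2\alpha}$ from the $T/K$ Young residues this gives the claimed $2KR(2G_f+R\alpha)+\tfrac{G_f^2T}{2\alpha}$.

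For the constraint violation I would first pass from $\widetilde{\y}_m$ to $\y_{m+1}$ using $G_g$-Lipschitzness of each $g_t^+$, obtaining $\sum_{t\in\mathcal{T}_m}g_t^+(\widetilde{\y}_m)\leq\tfrac{1}{\delta}G_m^+(\y_{m+1})+KG_g\enorm{\widetilde{\y}_m-\y_{m+1}}$, and then split the Lipschitz piece by Young's inequality with free parameter $c$ into $\tfrac{KG_g^2}{2c}+\tfrac{cK}{2}\enorm{\widetilde{\y}_m-\y_{m+1}}^2$; summing the first summand across $T/K$ blocks yields $\tfrac{G_g^2T}{2c}$. Two telescoping sums are then needed. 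First, rearranging the master inequality with $\x^*\in\widehat{\mK}_{s,e}$ to isolate $\tfrac{\alpha}{2}\enorm{\widetilde{\y}_m-\y_{m+1}}^2$, dropping $-G_m^+(\y_{m+1})$ (which only enlarges the right-hand side), and telescoping gives $\sum_m\enorm{\widetilde{\y}_m-\y_{m+1}}^2\leq 4R^2+\tfrac{4RG_fT}{\alpha K}$, which turns the Young residue into exactly $2cR(\tfrac{G_fT}{\alpha}+RK)$. Second, the same master inequality combined with the same gradient split and Young step as in the regret argument yields $G_m^+(\y_{m+1})\leq 2RG_f+\tfrac{G_f^2}{2\alpha}+\tfrac{\alpha}{2}(\enorm{\x^*-\widetilde{\y}_m}^2-\enorm{\widetilde{\y}_{m+1}-\x^*}^2)$; summing and dividing by $\delta$ produces $\tfrac{2\alpha R^2}{\delta}+(\tfrac{G_f}{\alpha}+4R)\tfrac{G_fT}{2\delta K}$. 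The at-most-two boundary blocks add a further $4KRG_g$ via $g_t^+(\widetilde{\y}_m)\leq g_t^+(\x^*)+2RG_g=2RG_g$ for $t\in[s,e]$, $\x^*\in\widehat{\mK}_{s,e}$.

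The main obstacle I anticipate is bookkeeping rather than any single conceptual step: the master inequality is reused three times with subtly different Young splits---one that cancels the $\tfrac{\alpha}{2}\enorm{\widetilde{\y}_m-\y_{m+1}}^2$ term to obtain a regret telescoping, one that preserves it to telescope $\sum_m\enorm{\widetilde{\y}_m-\y_{m+1}}^2$, and one more cancelling split to telescope $\sum_m G_m^+(\y_{m+1})$---and in each case the two boundary blocks must be excised before telescoping so that every remaining interior block enjoys $G_m^+(\x^*)=0$ for $\x^*\in\widehat{\mK}_{s,e}$.
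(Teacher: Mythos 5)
Your proposal is correct and follows essentially the same route as the paper: the master inequality is the paper's Eq.~\eqref{eq:h_optimal_leq_every_x} (strong convexity of $h_m$ plus the AFP contraction), the Young split that cancels $\tfrac{\alpha}{2}\enorm{\y_{m+1}-\widetilde{\y}_m}^2$ is the paper's completing-the-square step, and the two telescoping sums for $\sum_m\enorm{\widetilde{\y}_m-\y_{m+1}}^2$ and $\sum_m G_m^+(\y_{m+1})$, the $c$-parameterized Lipschitz split, and the $2K$-boundary-block charges all match Eqs.~\eqref{eq:sum_of_steps_dist}--\eqref{eq:sum_violations_playing_y_tilde} and the surrounding argument, with the constants working out identically.
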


\begin{proof}
    Fix some $m \in [T/K]$. Note $h_m(\cdot)$ is  $\alpha$-strongly convex. Since $\y_{m+1} \in R\ball$ minimizes $h_m(\x)$ over $R\ball$, if follows that for any $\x\in{}R\ball$, $ h_m(\x)  \geq h_m(\y_{m+1}) + \frac{\alpha}{2} \enorm{\x - \y_{m+1}}^2$. Thus, for every $\x \in \mK\subseteq{}R\ball$,
    \begin{align*}
        &\bar{\nabla}_m^\top \brac{\x - \widetilde{\y}_m}\hspace{-1pt}   +\hspace{-1pt}  G_m^+(\x)\hspace{-1pt}  + \hspace{-1pt} \frac{\alpha}{2}\hspace{-1pt}   \brac{ \enorm{\x - \widetilde{\y}_{m}}^2 \hspace{-1pt}-\hspace{-1pt} \enorm{\x - \y_{m+1}}^2 } \nonumber \\
        & \geq \bar{\nabla}_m^\top  \brac{ \y_{m+1} - \widetilde{\y}_m} + G_m^+( \y_{m+1}) + \frac{\alpha}{2} \enorm{ \y_{m+1} - \widetilde{\y}_{m}}^2. 
    \end{align*}
    Since $\widetilde{\y}_{m+1}$ is the output of $\mathcal{O}_{AFP}\brac{\y_{m+1},\epsilon,\mK}$, it follows that  $\enorm{\x - \widetilde{\y}_{m+1}} \leq \enorm{\x - \y_{m+1}}$ (Lemma \ref{lemma:CIP-FW}) and so, 
    \begin{align}
        &\bar{\nabla}_m^\top \brac{\x - \widetilde{\y}_m} \hspace{-1pt} +\hspace{-1pt}  G_m^+(\x)  \hspace{-1pt} + \hspace{-1pt} \frac{\alpha}{2}\hspace{-1pt}   \brac{\hspace{-1pt}  \enorm{\x - \widetilde{\y}_{m}}^2 \hspace{-1pt} -\hspace{-1pt}  \enorm{\x - \widetilde{\y}_{m+1}}^2 }  \nonumber \\
        & \geq  \bar{\nabla}_m^\top  \brac{\y_{m+1} - \widetilde{\y}_m} + G_m^+(\y_{m+1}) + \frac{\alpha}{2} \enorm{\y_{m+1} - \widetilde{\y}_{m}}^2. \label{eq:h_optimal_leq_every_x}
    \end{align}
Note that
    \begin{align*}
        &\bar{\nabla}_m^\top  \brac{\y_{m+1} -\widetilde{\y}_m} + \frac{\alpha}{2} \enorm{\y_{m+1} - \widetilde{\y}_{m}}^2 \\
        &= \enorm{\frac{\bar{\nabla}_m}{\sqrt{2 \alpha}} + \sqrt{\frac{\alpha}{2}} \brac{\y_{m+1} - \widetilde{\y}_{m}} }^2  \hspace{-2pt} -\hspace{-1pt}   \frac{\enorm{\bar{\nabla}_m}^2}{2 \alpha}\hspace{-1pt}  \geq  \hspace{-1pt} -  \frac{\enorm{\bar{\nabla}_m}^2}{2 \alpha}.
    \end{align*}
    Combining the last two inequalities, we have for any $\x\in\mK$,
    \begin{align*}
        &\bar{\nabla}_m^\top \brac{\x - \widetilde{\y}_m} + G_m^+(\x)  + \frac{\alpha}{2}\brac{ \enorm{\x - \widetilde{\y}_{m}}^2 - \enorm{\x - \widetilde{\y}_{m+1}}^2 } \\
        &\geq   G_m^+(\y_{m+1}) -  \frac{ \enorm{\bar{\nabla}_m}^2}{2 \alpha},
    \end{align*}
    which by rearranging gives that for every $\x \in \mK$,
    \begin{align}
        \bar{\nabla}_m^\top \brac{\widetilde{\y}_m - \x} &\leq   \frac{\alpha}{2}\brac{ \enorm{\x - \widetilde{\y}_{m}}^2 - \enorm{\x - \widetilde{\y}_{m+1}}^2 } \nonumber \\
        & + \frac{ \enorm{\bar{\nabla}_m}^2}{2 \alpha} +  G_m^+(\x) - G_m^+(\y_{m+1}). \label{eq:d+p_gradient_leq_than} 
    \end{align}
    
    Let us now fix some interval $[s,e], 1\leq s \leq e\leq T$. Let $m_s$ and $m_e$ denotes the smallest block index and the largest block index that are fully contained in the interval $[s,e]$, respectively. Note that $\enorm{\bar{\nabla}_m} \leq G_f$ for every $m$. Recall $G_m^+(\cdot) \geq 0$,  and  $G_m^+(\x) = 0$ for every $\x \in \widehat{\mK}_{s,e}$ and $m\in[m_s,m_e]$. Summing Eq.\eqref{eq:d+p_gradient_leq_than} over $m$ from $m_s$ to $m_e$, we have that for every $\x \in \widehat{\mK}_{s,e}$ it holds that,
    \begin{align}
        \hspace{-10pt}\sum_{m=m_s}^{m_e} \sum_{t\in \mathcal{T}_m}\hspace{-2pt}  \nabla_t^\top \brac{\widetilde{\y}_{m} - \x} \leq         K \frac{\alpha}{2} \enorm{\x - \widetilde{\y}_{m_s}}^2 +  \frac{ G_f^2}{2 \alpha} 
         T. \label{eq:regret_full_blocks}
    \end{align}
    Since $f_t(\cdot)$ is convex, for every $\x \in \widehat{\mK}_{s,e}$ it holds that 
    \begin{align*}
        &\sum_{t=s}^{e} f_t(\widetilde{\y}_{m(t)}) - f_t(\x) =  \sum_{t=s}^{(m_s-1)K} f_t(\widetilde{\y}_{m_s - 1}) - f_t(\x) \\
        &+ \hspace{-5pt}\sum_{m=m_s}^{m_e} \sum_{t\in \mathcal{T}_m} f_t(\widetilde{\y}_{m}) - f_t(\x)  +\hspace{-10pt} \sum_{t=m_eK+1}^{e} \hspace{-10pt}f_t(\widetilde{\y}_{m_e + 1}) - f_t(\x) \\ 
        &\leq  \sum_{t=s}^{(m_s-1)K} \enorm{\nabla_t} \enorm{\widetilde{\y}_{m_s - 1} - \x} + \hspace{-6pt}\sum_{m=m_s}^{m_e} \sum_{t\in \mathcal{T}_m} \nabla_t^\top \brac{\widetilde{\y}_{m} - \x} \\
        & ~+ \sum_{t=m_eK+1}^{e} \enorm{\nabla_t} \enorm{\widetilde{\y}_{m_e + 1} - \x}.
    \end{align*}
    From Lemma \ref{lemma:CIP-FW} it follows that  $\widetilde{\y}_m \in R\ball$ for every $m$. plugging this, Eq.\eqref{eq:regret_full_blocks}, and the fact that $\forall t\in[T]$ $\Vert{\nabla_t}\Vert \leq G_f$   into the last inequality, we have that for every $\x \in \widehat{\mK}_{s,e}$,
        \begin{align*}
        \sum_{t=s}^{e} f_t(\widetilde{\y}_{m(t)}) - f_t(\x) \leq & 4 K G_f R + 2 K R^2 \alpha +  \frac{ G_f^2}{2 \alpha}T.
    \end{align*}
    Now, we move on to bound the  constraint violation on the interval $[s,e]$. 
    We start with some preliminaries.
    Rearranging Eq.\eqref{eq:h_optimal_leq_every_x}, we have  for every $\x \in \mK$ and $m \in [T/K]$ that, 
    \begin{align*}
         &\enorm{\y_{m+1} - \widetilde{\y}_{m}}^2  \leq \frac{2}{\alpha}  \bar{\nabla}_m^\top \brac{\x - \y_{m+1}}\\
         &\hspace{-1pt} + \hspace{-1pt}\frac{2}{\alpha}  \brac{G_m^+(\x) \hspace{-1pt}- \hspace{-1pt}G_m^+(\y_{m+1})}  \hspace{-1pt}+ \hspace{-1pt} \enorm{\x - \widetilde{\y}_{m}}^2 \hspace{-1pt}-\hspace{-1pt} \enorm{\x - \widetilde{\y}_{m+1}}^2 \\
         & \leq \frac{4R}{\alpha}  \enorm{\bar{\nabla}_m} \hspace{-1pt} +\hspace{-1pt} \frac{2}{\alpha}  G_m^+(\x)  \hspace{-1pt}+\hspace{-1pt}   \enorm{\x - \widetilde{\y}_{m}}^2 \hspace{-1pt}-\hspace{-1pt} \enorm{\x - \widetilde{\y}_{m+1}}^2,
    \end{align*}    
    where in the last inequality we have used the facts that $\x\in\mK\subseteq{}R\ball$ and $\y_{m+1}\in{}R\ball$, and that $G_m^+(\cdot) \geq 0$ .
    Summing the above inequality over $m$ from $m_s$ to $m_e$, and recalling that $\widetilde{\y}_m \in R\ball$  and $\enorm{\bar{\nabla}_m} \leq G_f$ for all $m$, we have that  for every $\x \in \mK$ it holds that, 
    \begin{align}
        \hspace{-12pt}\sum_{m=m_s}^{m_e}\hspace{-5pt} \enorm{\y_{m+1} - \widetilde{\y}_{m}}^2 &  \hspace{-1pt}\leq\hspace{-1pt}  \frac{ 4 G_f R T }{\alpha K}   \hspace{-2pt}+\hspace{-2pt} \frac{2}{\alpha} \hspace{-3pt}\sum_{m=m_s}^{m_e}\hspace{-6pt} G_m^+(\x) \hspace{-2pt}+\hspace{-2pt}  4R^2. \label{eq:sum_of_steps_dist}
    \end{align}
    Rearranging Eq.\eqref{eq:d+p_gradient_leq_than}, and summing it over $m$ from $m_s$ to $m_e$, we have that for every $\x \in \mK$ it holds that, 
    \begin{align*}
        \hspace{-5pt}\sum_{m=m_s}^{m_e} \hspace{-5pt}G_m^+(\y_{m+1}) & \leq  \hspace{-5pt} \sum_{m=m_s}^{m_e} \hspace{-3pt}\frac{\alpha}{2}  \brac{ \enorm{\x - \widetilde{\y}_{m}}^2 - \enorm{\x - \widetilde{\y}_{m+1}}^2 }\\
        &+\hspace{-6pt}\sum_{m=m_s}^{m_e} \hspace{-5pt}  \left({\frac{ \hspace{-2pt}\enorm{\bar{\nabla}_m}^2}{2 \alpha}  \hspace{-1pt}+ \hspace{-1pt}  G_m^+(\x)  \hspace{-1pt}+ \hspace{-1pt}  \bar{\nabla}_m^\top \brac{\x - \widetilde{\y}_m }\hspace{-2pt}}\right) \\
        & \leq  2 \alpha R^2 +   \frac{ G_f^2 T}{2 \alpha K} +  \hspace{-6pt}\sum_{m=m_s}^{m_e} \hspace{-5pt}G_m^+(\x) + \frac{2G_fRT}{K}. 
    \end{align*}
    Recalling  $G_m^+(\x) = \delta  \sum_{t\in \mathcal{T}_m}g_t^+(\x)$, 
    we have $\forall\x\in\mK$, 
    \begin{align}
        \hspace{-14pt}\sum_{m=m_s}^{m_e}\sum_{t\in\mathcal{T}_m} g_t^+(\y_{m(t)+1}) &\leq  \frac{2 \alpha R^2}{\delta}  +  \frac{ G_f^2 T}{2 \alpha \delta K} \nonumber \\
        &+  \frac{1}{\delta} \sum_{m=m_s}^{m_e} G_m^+(\x) + \frac{ 2 G_f R T}{\delta K}. \label{eq:sum_violations_playing_y_tilde}
    \end{align}
   Since $g_t(\cdot)$ is convex and $G_g$-Lipschitz over $R\ball$ for every $t$,  $g_t^+(\x) = \max\{0, g_t(\x)\}$ is also convex and $G_g$-Lipschitz over $R\ball$. Using the inequality $2ab \leq a^2 + b^2$ for every $a,b \in \reals$, we have that for every $t\in[T]$ and $c>0$,
    \begin{align*}
        g_t^+(\widetilde{\y}_{m(t)}) - g_t^+(\y_{{m(t)+1}}) & \leq G_g  \enorm{\widetilde{\y}_{m(t)} - \y_{m(t)+1}} \\
        &\leq  \frac{G_g^2}{2 c} + \frac{c}{2} \enorm{ \widetilde{\y}_{m(t)} - \y_{m(t)+1} }^2. 
    \end{align*}
    Summing the last inequality over $t$, and combining it with Eq.\eqref{eq:sum_of_steps_dist}, and Eq.\eqref{eq:sum_violations_playing_y_tilde}, we have that  for every $\x \in \widehat{\mK}_{s,e} \subseteq \mK$,  
    \begin{align*}
        &\hspace{-4pt}\sum_{m=m_s}^{m_e}\sum_{t\in\mathcal{T}_m} g_t^+(\widetilde{\y}_{m(t)})  \leq  \frac{G_g^2}{2 c}T \hspace{-1pt}+\hspace{-1pt} \frac{cK}{2}\hspace{-4pt} \sum_{m=m_s}^{m_e}\hspace{-4pt}\enorm{\widetilde{\y}_{m} - \y_{m+1}}^2 \nonumber \\
        & + \sum_{m=m_s}^{m_e}\sum_{t\in\mathcal{T}_m}g_t^+(\y_{{m(t)+1}}) \nonumber  \\
       & \leq  \frac{G_g^2 }{2 c} T + \frac{ 2 c G_f R T }{\alpha} + 2 c R^2 K + \brac{\frac{cK}{\alpha} +  \frac{1}{\delta}}\hspace{-5pt} \sum_{m=m_s}^{m_e}\hspace{-5pt} G_m^+(\x) \nonumber \\
        &+ \frac{2 \alpha R^2}{\delta}  +  \frac{ G_f^2 T}{2 \alpha \delta K}  + \frac{ 2 G_f R T}{\delta K}\\
        &= \hspace{-2pt} \frac{G_g^2T }{2 c} \hspace{-2pt}+\hspace{-2pt} \frac{ 2 c G_f R T }{\alpha} \hspace{-2pt}+ \hspace{-2pt}2 c R^2 K \hspace{-2pt}+ \hspace{-2pt}\frac{2 \alpha R^2}{\delta}  \hspace{-2pt}+ \hspace{-2pt} \frac{ G_f^2 T}{2 \alpha \delta K}  \hspace{-2pt}+\hspace{-2pt} \frac{ 2 G_f R T}{\delta K},
    \end{align*}
where that last equality holds since $G_m^+(\x) = 0 $ for every $\x \in \widehat{\mK}_{s,e}$ and $m\in[m_s,m_e]$. 

The proof is concluded by noting that the number of first  and last  iterates in the interval $[s,e]$ that are not contained in the blocks $m_s,\dots{}m_e$, i.e., the set $[s,e]\setminus\bigcup_{m=m_s}^{m_e}\mathcal{T}_m$, is at most $2K$. These add an additional term which is at most $2K\cdot2RG_g$ to the RHS of the last inequality by using the fact that for each $t\in[s,e]\setminus\bigcup_{m=m_s}^{m_e}\mathcal{T}_m$ and $\x\in\widehat{\mK}_{s,e}$,
$g_t^+(\widetilde{\y}_{m(t)}) \leq g_t^+(\x) + \Vert{\widetilde{\y}_{m(t)}-\x}\Vert\cdot{}G_g \leq 2RG_g$.
\end{proof}

\begin{proof}[Proof of Theorem \ref{thm:d+p}]
Fix some interval $[s,e], 1\leq s \leq e\leq T$ such that $\widehat{\mK}_{s,e}\neq \emptyset$. Applying Lemma \ref{lemma:d+p_afp_regret} and Lemma \ref{lemma:CIP-FW}, we have that for every $\x \in \widehat{\mK}_{s,e}$ it holds that,
\begin{align*}
    &\sum_{t=s}^{e} f_t(\x_{m(t)}) - f_t(\x)  =\nonumber \\
    &\sum_{t=s}^{e} f_t(\x_{m(t)}) - f_t(\widetilde{\y}_{m(t)}) +f_t(\widetilde{\y}_{m(t)})  - f_t(\x)\leq \nonumber\\
    &  4 K G_f R + 2 K R^2 \alpha +  \frac{ G_f^2}{2 \alpha}T + \sum_{t=s}^{e} \enorm{\nabla_t} \enorm{\x_{m(t)} - \widetilde{\y}_{m(t)}} \nonumber\\
    & \leq 4 K G_f R + 2 K R^2 \alpha +  \frac{ G_f^2}{2 \alpha}T + T G_f \sqrt{3\epsilon},
\end{align*}
where the last inequality also uses the facts that $\widetilde{\y}_m \in R\ball$ and that $\Vert{\nabla_t}\Vert\leq G_f$  for every $t\in[T]$. 

plugging-in the values of $K,\alpha,\epsilon$ stated in the theorem, we obtain the required bound on the  regret.

Now we move on to bound the constraints violation. Since for every $t\in[T]$, $g_t(\cdot)$ is convex and $G_g$-Lipschitz over $R\ball$, it follows that $g_t^+(\x)$ is also convex and $G_g$-Lipschitz over $R\ball$. Using Lemma \ref{lemma:CIP-FW} we have that for every $t\in[T]$,
\begin{align*}
    g_t^+(\x_{m(t)}) - g_t^+(\tilde{\y}_{{m(t)}}) & \leq G_g  \enorm{\x_{m(t)} - \tilde{\y}_{m(t)}} \leq G_g \sqrt{3 \epsilon} . 
\end{align*}
Summing over $t\in[s,e]$, and using Lemma \ref{lemma:d+p_afp_regret}, it holds that 
\begin{align*}
    &\sum_{t=s}^{e} g_t^+(\x_{m(t)})  \leq   \frac{G_g^2 }{2 c} T + 2cR \brac{\frac{ G_f T }{\alpha} + R K}  + \frac{2 \alpha R^2}{\delta}  \\
    &+   \brac{ \frac{ G_f }{ \alpha }  +  4 R  } \frac{G_f T}{2 \delta K} +  4KRG_g + G_g \sqrt{3 \epsilon} T.
\end{align*}
Plugging-in $c = \frac{G_g}{12 R}T^\frac{1}{8}$ and the values of $K,\alpha,\epsilon, \delta$ listed in the theorem, the overall constraints violation bound stated in the theorem is obtained. 

Finally, we move on to bound the overall number of calls to the linear optimization oracle. Using Lemma \ref{lemma:CIP-FW}, the overall number of calls to linear optimization oracle is 
{\small
\begin{align*}
    N_{calls} &\leq \sum_{m=2}^{T/K} \frac{27R^2  }{\epsilon } \brac{2.25\log\brac{ \frac{\enorm{\tilde{\y}_{m+1} - \x_m}^2 }{ \epsilon} }+1} \\
     &\leq \sum_{m=2}^{T/K} \frac{27R^2  }{\epsilon } \brac{2.25\log\brac{ \frac{4R^2 }{ \epsilon} }+2.25 \log (e^\frac{4}{9})}\\
    & \leq \frac{27R^2 T }{\epsilon K} \brac{2.25\log\brac{ \frac{7R^2 }{ \epsilon} }} \leq \frac{61R^2 T }{\epsilon K} \log\brac{ \frac{7R^2 }{ \epsilon} } .
\end{align*}}
Plugging-in the values of $\epsilon, K$ listed in the theorem, we obtain that  $N_{calls} \leq T$, as required.
\end{proof}

\section{Lagrangian-based Primal-Dual Algorithm}\label{sec:lagrange}
Our second algorithm, Algorithm \ref{alg:LF}, is based on an online first-order primal-dual approach which has also been studied extensively, see for instance \cite{mahdavi2012trading, cao2018online, pmlr-v48-jenatton16, pmlr-v70-sun17a}. This algorithm is more efficient than Algorithm \ref{alg:d+p} since it does not require any exact minimization of an optimization problem involving the constraint functions $g_t, t\in[T]$, and instead only requires first-order access (i.e., to compute subgradients) of $f_t,g_t, t\in[T]$. On the downside,  we will only have standard (i.e., non-adaptive) guarantees for this method that hold only w.r..t. the entire sequence of functions. 

The algorithm performs online primal-dual (sub)gradient updates w.r.t. the dual-regularized Lagrangian functions 
\begin{align*}
\mathcal{L}_t(\x,\lambda) := f_t(\x) + \lambda g_t^+(\x) - \frac{\delta \eta}{2} \lambda^2 \quad t\in[T]
\end{align*} 
and their block aggregations $\mathcal{L}_m(\x,\lambda) := \sum_{t=(m-1)K+1}^{mK}\mathcal{L}_t(\x,\lambda), m\in[T/K]$. Note that the vectors $\nabla_{m,\x}, \nabla_{m,\lambda}$ in Algorithm \ref{alg:LF} are indeed the derivatives of $\mathcal{L}_m(\x,\lambda)$ w.r.t. $\x$ and $\lambda$, evaluated at $(\x_m,\lambda_m)$.

\begin{algorithm}
\begin{algorithmic}
\STATE \textbf{data: }{parameters $T$, $K$, $ \epsilon$, $\delta$, $\eta$}
\STATE $\x_1=\tilde{\y}_1 \gets $ arbitrary point in $\mK$, $\lambda_1 \gets 0$
\FOR{$~ m = 1,\ldots,T/K ~$}
\FOR{$~ t = (m-1)K+1,\ldots,mK ~$}
    \STATE Play $\x_{m} $ and observe $f_{t}(\cdot), g_t(\cdot)$ 
    \STATE Set $\nabla{}f_t \in \partial f_t(\x_{m})$ and $\nabla_{}g^+_t \in \partial g_t^+(\x_{m})$
\ENDFOR
    \STATE Denote $\mathcal{T}_m = \{ (m-1)K +1 , \dots, mK \}$
    \STATE  $\nabla_{m,\x} \gets  \sum_{t\in \mathcal{T}_m} \nabla{}f_t + \lambda_m \nabla{}g^+_t$
    \STATE  $\nabla_{m,\lambda} \gets \sum_{t\in \mathcal{T}_m}\left({  g_t^+(\x_m) -\delta \eta \lambda_m}\right)$
    \STATE  $\y_{m+1} \gets \Pi_{R\ball}[\tilde{\y}_m - \eta \nabla_{m,\x}]$
    \STATE  $\brac{\x_{m+1},\tilde{\y}_{m+1}}\gets \mathcal{O}_{AFP}\brac{\y_{m+1},\x_m,\epsilon,\mK}$    \COMMENT{Alg.  \ref{alg:CIP-FW}}
    \STATE  $\lambda_{m+1} \gets\Pi_{\reals_{+}} \brac{\lambda_m  + \eta \nabla_{m,\lambda}}$
\ENDFOR
\end{algorithmic}
\caption{LOO-based Primal-Dual Method}
 \label{alg:LF}
\end{algorithm}

\begin{theorem}\label{thm:LF} 
Suppose  $\widehat{\mK} = \{ \x \in \mK :  g_t(\x) \leq 0~ \forall t\in [T] \}\neq \emptyset$. For every $T$ sufficiently large, setting $\delta = 32\brac{G_g^2 + G_g R}T^{\frac{1}{2}}\sqrt{\log{{T}}}, \eta = 
 T^{-\frac{3}{4}}, \epsilon = 61 R^2 T^{-\frac{1}{2}} \log{T } , K = T^\frac{1}{2}$ in Algorithm \ref{alg:LF} guarantees that the regret is upper bounded by
\begin{align*}
   &\sum_{t=1}^{T} f_t(\x_{m(t)}) - \min_{\x \in \widehat{\mK}}\sum_{t=1}^{T}\hspace{-1pt} f_t(\x) =\\
   & O\left({R\brac{G_f + G_g}  T^{\frac{3}{4}}\sqrt{  \log{T}} + (G_f^2+(G_g^2+1)R^2)T^{3/4}}\right),
%
\end{align*}
 the constraints violation is upper bounded by
\begin{align*}
    \hspace{-3pt}\sum_{t=1}^{T} g_t^+(\x_{m(t)}) \hspace{-2pt}=\hspace{-2pt} O\hspace{-2pt}\left({\hspace{-3pt}\sqrt{G_f R\left({1\hspace{-1pt}+\hspace{-1pt} G_g\brac{G_g + R }\sqrt{\log{T}}}\right)} T^{\frac{7}{8}}\hspace{-2pt}}\right)\hspace{-2pt},
\end{align*}
and the overall number of calls to the LOO is at most  $T$.
\end{theorem}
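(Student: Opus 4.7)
The plan is to analyze Algorithm \ref{alg:LF} as primal-dual (sub)gradient descent/ascent on the strongly-concave-in-$\lambda$ regularized Lagrangian $\mathcal{L}_t(\x,\lambda) = f_t(\x) + \lambda g_t^+(\x) - \frac{\delta\eta}{2}\lambda^2$, and to use this to jointly control the regret and the constraint violation through a single Lagrangian inequality. The starting point is the identity that, for any $\x^* \in \widehat{\mK}$ and any $\lambda \ge 0$, using $g_t^+(\x^*)=0$,
\begin{align*}
\sum_{m=1}^{T/K}\sum_{t\in\mathcal{T}_m}\left[\mathcal{L}_t(\x_m,\lambda)-\mathcal{L}_t(\x^*,\lambda_m)\right]
= \mathrm{Reg}(\x^*) + \lambda\,V - \tfrac{\delta\eta T}{2}\lambda^2 + \tfrac{K\delta\eta}{2}\sum_m \lambda_m^2,
\end{align*}
where $V=\sum_t g_t^+(\x_{m(t)})$. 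Choosing $\lambda=0$ will yield the regret bound (modulo a penalty in $\sum_m\lambda_m^2$), while optimizing over $\lambda\ge 0$ after lower-bounding $\mathrm{Reg}(\x^*) \ge -2RTG_f$ will yield the violation bound.

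Next I would upper-bound the LHS by the standard primal/dual split. Convexity of $\mathcal{L}_t(\cdot,\lambda_m)$ in $\x$ (with $\lambda_m\ge 0$) gives $\sum_{t\in\mathcal{T}_m}[\mathcal{L}_t(\x_m,\lambda_m)-\mathcal{L}_t(\x^*,\lambda_m)] \le \nabla_{m,\x}^\top(\x_m-\x^*)$, and strong concavity in $\lambda$ gives $\sum_{t\in\mathcal{T}_m}[\mathcal{L}_t(\x_m,\lambda)-\mathcal{L}_t(\x_m,\lambda_m)] \le \nabla_{m,\lambda}(\lambda-\lambda_m) - \tfrac{K\delta\eta}{2}(\lambda-\lambda_m)^2$. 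For the primal inner product, I split $\nabla_{m,\x}^\top(\x_m-\x^*) = \nabla_{m,\x}^\top(\widetilde{\y}_m-\x^*) + \nabla_{m,\x}^\top(\x_m-\widetilde{\y}_m)$; the first term is treated by the standard OGD telescoping argument, where Lemma \ref{lemma:CIP-FW} together with the projection onto $R\ball$ ensures $\enorm{\widetilde{\y}_{m+1}-\x^*}^2 \le \enorm{\widetilde{\y}_m-\eta\nabla_{m,\x}-\x^*}^2$, and the second (AFP-error) term is bounded by $\sqrt{3\epsilon}\enorm{\nabla_{m,\x}}$. The dual inner product is handled by the analogous OGA telescoping starting at $\lambda_1=0$.

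The main obstacle is that the gradient norms are coupled to $\lambda_m$: $\enorm{\nabla_{m,\x}}^2 \le 2K^2(G_f^2+\lambda_m^2 G_g^2)$ and $\nabla_{m,\lambda}^2 \le 2K^2(4G_g^2R^2+\delta^2\eta^2\lambda_m^2)$, so substituting these into the OGD/OGA bounds produces terms proportional to $\eta K^2(G_g^2+\delta^2\eta^2)\sum_m\lambda_m^2$. The crucial step is to absorb these into the positive $\tfrac{K\delta\eta}{2}\sum_m\lambda_m^2$ term on the LHS. This absorption is valid precisely when $\delta \gtrsim K(G_g^2+\delta^2\eta^2)$, which, given $K=T^{1/2}$ and $\eta=T^{-3/4}$, forces the scaling $\delta = \Theta((G_g^2+G_gR)T^{1/2}\sqrt{\log T})$ and explains the ``$T$ sufficiently large'' hypothesis. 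The AFP-error term $\sqrt{3\epsilon}\sum_m\enorm{\nabla_{m,\x}} \le \sqrt{3\epsilon}\bigl(TG_f + KG_g\sum_m\lambda_m\bigr)$ is controlled by Cauchy--Schwarz against $\sum_m\lambda_m^2$, again absorbed by the same mechanism.

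Once the absorption is done, setting $\lambda=0$ yields $\mathrm{Reg}(\x^*) \le \tfrac{2R^2}{\eta} + \eta KT(G_f^2+G_g^2R^2) + O(\sqrt{\epsilon}\cdot TG_f)$, which on plugging $\eta=T^{-3/4}$, $K=T^{1/2}$, $\epsilon=\Theta(R^2T^{-1/2}\log T)$ gives the claimed $\tilde O(T^{3/4})$ rate. For the violation, using $\mathrm{Reg}(\x^*) \ge -2RTG_f$, rearranging gives $\lambda V - \tfrac{\delta\eta T}{2}\lambda^2 \le B$ with $B$ equal to the regret bound plus $2RTG_f$; optimizing $\lambda = \sqrt{2B/(\delta\eta T)} \ge 0$ yields $V \le \sqrt{2B\delta\eta T}$, and substituting the parameter choices delivers the claimed $\tilde O(T^{7/8})$ bound. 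Finally, the LOO count follows verbatim from the Lemma \ref{lemma:CIP-FW} bookkeeping used in the proof of Theorem \ref{thm:d+p}, since $\epsilon$ and $K$ take the same values here.
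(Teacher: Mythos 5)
Your proposal is correct and follows essentially the same route as the paper: the regularized Lagrangian decomposition, the primal-dual OGD/OGA telescoping with the AFP error term $\sqrt{3\epsilon}\enorm{\nabla_{m,\x}}$, the absorption of the $\lambda_m$-dependent gradient-norm terms into the $\tfrac{K\delta\eta}{2}\sum_m\lambda_m^2$ slack (which is exactly what fixes $\delta$ and the ``$T$ sufficiently large'' condition), and the final optimization over $\lambda$ combined with $\mathrm{Reg}(\x^*)\geq -2RG_fT$ for the violation. The only cosmetic difference is that the paper plugs in the maximizing $\lambda^*$ once to obtain both bounds from a single inequality, while you set $\lambda=0$ for the regret and optimize $\lambda$ separately for the violation; these are equivalent.
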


The proof of the theorem builds on the following lemma.

\begin{lemma}\label{lemma:LF}
Algorithm \ref{alg:LF} guarantees that for every $\x \in \widehat{\mK}$ and $\lambda \in \reals_+$, 
\begin{align*}
    &\sum_{t=1}^{T} \mathcal{L}_t \brac{\x_{m(t)},\lambda}  - \mathcal{L}_t \brac{\x,\lambda_{m(t)}}  \leq \\
    &\frac{2R^2}{\eta} + \frac{\lambda^2}{2\eta} + G_f \sqrt{3\epsilon} T  +  G_f^2  \eta K T   + 4 G_g^2 R^2 \eta K T\\
    &  + G_g \sqrt{3\epsilon} K \sum_{m=1}^{T/K} \lambda_m    + \brac{\delta^2 \eta^3 K^2  +  G_g^2 \eta K^2} \sum_{m=1}^{T/K} \lambda_m^2.
\end{align*}
\end{lemma}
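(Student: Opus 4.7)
}
The plan is a standard primal--dual saddle-point analysis, adapted to handle two complications: (i) the algorithm's primal state is the (infeasible) point $\widetilde{\y}_m$, while the subgradients are evaluated at the feasible playing point $\x_m$, and (ii) the primal update uses a Euclidean projection onto $R\ball$ followed by the AFP oracle on $\mK$. First I would decompose the saddle-point gap per block as
\begin{align*}
\sum_{t\in\mathcal{T}_m} \bigl[\mathcal{L}_t(\x_m,\lambda) - \mathcal{L}_t(\x,\lambda_m)\bigr]
&= \underbrace{\sum_{t\in\mathcal{T}_m}\bigl[\mathcal{L}_t(\x_m,\lambda)-\mathcal{L}_t(\x_m,\lambda_m)\bigr]}_{\text{dual gap}} \\
&\quad + \underbrace{\sum_{t\in\mathcal{T}_m}\bigl[\mathcal{L}_t(\x_m,\lambda_m)-\mathcal{L}_t(\x,\lambda_m)\bigr]}_{\text{primal gap}}.
\end{align*}

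For the dual gap, since $\mathcal{L}_t(\x,\cdot)$ is concave in $\lambda$, concavity yields $\sum_{t\in\mathcal{T}_m}[\mathcal{L}_t(\x_m,\lambda)-\mathcal{L}_t(\x_m,\lambda_m)]\leq \nabla_{m,\lambda}(\lambda-\lambda_m)$ (the $-\tfrac{K\delta\eta}{2}(\lambda-\lambda_m)^2$ term from the quadratic regularizer is nonpositive and can be dropped). I then apply the standard projected gradient ascent inequality for $\lambda_{m+1}=\Pi_{\reals_+}(\lambda_m+\eta\nabla_{m,\lambda})$, telescope over $m$ using $\lambda_1=0$, and obtain $\sum_m\nabla_{m,\lambda}(\lambda-\lambda_m)\leq \frac{\lambda^2}{2\eta}+\frac{\eta}{2}\sum_m\nabla_{m,\lambda}^2$. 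For the primal gap, convexity of each $f_t$ and each $g_t^+$ (with the $\lambda_m\geq 0$ multiplier) at $\x_m$ gives $\sum_{t\in\mathcal{T}_m}[\mathcal{L}_t(\x_m,\lambda_m)-\mathcal{L}_t(\x,\lambda_m)]\leq\nabla_{m,\x}^\top(\x_m-\x)$. I split $\x_m-\x=(\x_m-\widetilde{\y}_m)+(\widetilde{\y}_m-\x)$; the first piece is bounded by Cauchy--Schwarz and the AFP guarantee $\|\x_m-\widetilde{\y}_m\|\leq\sqrt{3\epsilon}$ from Lemma \ref{lemma:CIP-FW}. For the second piece I apply the projected gradient descent inequality for $\y_{m+1}=\Pi_{R\ball}[\widetilde{\y}_m-\eta\nabla_{m,\x}]$ (valid since $\x\in\widehat{\mK}\subseteq R\ball$), and then use the contraction property $\|\widetilde{\y}_{m+1}-\x\|\leq\|\y_{m+1}-\x\|$ from Lemma \ref{lemma:CIP-FW} to make the bound telescope to $\frac{1}{2\eta}\|\widetilde{\y}_1-\x\|^2\leq\frac{2R^2}{\eta}$, picking up a residual $\frac{\eta}{2}\sum_m\|\nabla_{m,\x}\|^2$.

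What remains is to bound the subgradient norms. Using the triangle inequality and $\|\nabla f_t\|\leq G_f$, $\|\nabla g_t^+\|\leq G_g$, I get $\|\nabla_{m,\x}\|\leq K(G_f+\lambda_m G_g)$, hence $\|\nabla_{m,\x}\|^2\leq 2K^2 G_f^2+2K^2G_g^2\lambda_m^2$; summing over $m\in[T/K]$ produces the terms $G_f^2\eta K T$ and $G_g^2\eta K^2\sum_m\lambda_m^2$, and the AFP residual gives $G_f\sqrt{3\epsilon}T+G_g\sqrt{3\epsilon}K\sum_m\lambda_m$. For the dual gradient, the bound $|\nabla_{m,\lambda}|\leq K\cdot 2G_gR+K\delta\eta\lambda_m$ follows from showing $g_t^+(\x_m)\leq 2G_gR$: since $\widehat{\mK}\ne\emptyset$, there exists $\x^\star\in\widehat{\mK}$ with $g_t^+(\x^\star)=0$, and $G_g$-Lipschitzness of $g_t^+$ over $R\ball$ (inherited from $g_t$) together with $\|\x_m-\x^\star\|\leq 2R$ yields the bound. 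Then $\nabla_{m,\lambda}^2\leq 8K^2G_g^2R^2+2K^2\delta^2\eta^2\lambda_m^2$, producing the $4G_g^2R^2\eta KT$ and $\delta^2\eta^3K^2\sum_m\lambda_m^2$ terms.

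The main obstacle I anticipate is bookkeeping: the $\lambda_m$-dependent gradient norm terms on the right-hand side cannot be absorbed at this stage and must be carried through as $\sum_m\lambda_m^2$ and $\sum_m\lambda_m$; in the subsequent proof of Theorem \ref{thm:LF} these will be handled by choosing $\lambda$ carefully and using the negative quadratic $-\frac{\delta\eta}{2}\sum_t\lambda_{m(t)}^2$ hidden inside $\mathcal{L}_t(\x,\lambda_{m(t)})$ to cancel them. At the lemma level one just needs to be careful not to over-apply AM--GM and to keep the $\sqrt{3\epsilon}$ versus $\epsilon$ factors straight when invoking Lemma \ref{lemma:CIP-FW}.
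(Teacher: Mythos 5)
Your proposal is correct and follows essentially the same route as the paper's proof: the convex--concave decomposition at $(\x_m,\lambda_m)$, the split $\x_m-\x=(\x_m-\widetilde{\y}_m)+(\widetilde{\y}_m-\x)$ handled by the AFP bound $\|\x_m-\widetilde{\y}_m\|\leq\sqrt{3\epsilon}$ and the contraction property, the projected-gradient telescoping with $\widetilde{\y}_1\in\mK$ and $\lambda_1=0$, and the final gradient-norm bounds via $\|\nabla_{m,\x}\|\leq K(G_f+\lambda_m G_g)$ and $g_t^+(\x_m)\leq 2G_gR$. All the constants you track ($2R^2/\eta$, $G_f^2\eta KT$, $4G_g^2R^2\eta KT$, $\delta^2\eta^3K^2$, $G_g^2\eta K^2$) come out matching the lemma's statement.
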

\begin{proof}
Using Lemma \ref{lemma:CIP-FW} and the facts that $\y_{m+1} = \Pi_{R\ball}\brac{\tilde{\y}_m - \eta \nabla_{m,\x}}$ and $\mK \subseteq R\ball$, for every $\x \in \mK$ and $m\in[T/K]$ we have that,
\begin{align*}
    &\enorm{ \tilde{\y}_{m+1} - \x}^2 \leq  \enorm{ \y_{m+1}- \x}^2 \leq \enorm{\tilde{\y}_m -\eta \nabla_{m,\x}-\x}^2\\
    &  = \enorm{\tilde{\y}_m-\x}^2 -2\eta \nabla_{m,\x}^\top \brac{\tilde{\y}_m - \x} + \eta^2 \enorm{\nabla_{m,\x}}^2,
\end{align*}
and rearranging, we have that
\begin{align}
    \nabla_{m, \x}^\top \brac{\tilde{\y}_m - \x}  & \leq \frac{1}{2\eta} \brac{ \enorm{ \tilde{\y}_m - \x}^2 - \enorm{\tilde{\y}_{m+1} -\x}^2} \nonumber\\
    &~~~+ \frac{\eta}{2} \enorm{\nabla}_{m,\x}^2. \label{eq:bound_grad_x}
\end{align}
Since $\lambda_{m+1} = \Pi_{\reals_{+}} \brac{\lambda_m  + \eta \nabla_{m,\lambda}}$, for every $\lambda \in \reals_+$ it holds that,
\begin{align*}
    &\brac{\lambda_{m+1}-\lambda}^2 \leq \brac{ \lambda_m + \eta \nabla_{m,\lambda} - \lambda}^2 \\
    &= \brac{\lambda_{m}- \lambda}^2 + 2 \eta \nabla_{m ,\lambda} \brac{ \lambda_m- \lambda } +\eta^2 \nabla_{m,\lambda}^2,
\end{align*}
and rearranging, we have that  
\begin{align}
     \nabla_{m, \lambda} \brac{ \lambda - \lambda_m} &\leq \frac{1}{2\eta} \brac{\brac{ \lambda_{m} - \lambda}^2 -  \brac{ \lambda_{m+1} - \lambda }^2} \nonumber \\
     &~~~+\frac{\eta}{2} \nabla_{m,\lambda}^2. \label{eq:bound_grad_lambda}
\end{align}
Since $\mathcal{L}_t\brac{\cdot,\lambda}$ is convex, it holds that  
\begin{align*}
   \hspace{-2pt} \mathcal{L}_t \brac{\x_m,\lambda_m}\hspace{-1pt} -\hspace{-1pt} \mathcal{L}_t \brac{\x,\lambda_m}\hspace{-1pt} \leq \hspace{-1pt}\nabla_{\x}\mathcal{L}_t\brac{\x_m,\lambda_m}^\top \brac{\x_m \hspace{-1pt}-\hspace{-1pt} \x},
\end{align*}
and since $\mathcal{L}_t\brac{\x_m , \cdot}$ is concave, it holds that
\begin{align*}
    \hspace{-2pt}\mathcal{L}_t \brac{\x_m,\lambda} \hspace{-1pt}-\hspace{-1pt} \mathcal{L}_t \brac{\x_m,\lambda_m} \hspace{-1pt}\leq\hspace{-1pt} \nabla_{\lambda}\mathcal{L}_t\brac{\x_m,\lambda_m} \brac{\lambda\hspace{-1pt}-\hspace{-1pt}\lambda_m }.
\end{align*}
Combining the last two inequalities we have that,
\begin{align*}
    \mathcal{L}_t \brac{\x_m,\lambda}  - \mathcal{L}_t \brac{\x,\lambda_m} &\leq \nabla_{\x}\mathcal{L}_t\brac{\x_m,\lambda_m}^\top \brac{\x_m - \x} \\
    &~~~+ \nabla_{\lambda}\mathcal{L}_t\brac{\x_m,\lambda_m} \brac{\lambda-\lambda_m }.
\end{align*}
Summing the above inequality over $t\in\mathcal{T}_m $ and recalling that  $\nabla_{m,\x} =  \sum_{t\in\mathcal{T}_m} \nabla_{\x}\mathcal{L}_t\brac{\x_m,\lambda_m}$ and $\nabla_{m,\lambda} = \sum_{t\in\mathcal{T}_m} \nabla_{\lambda}\mathcal{L}_t\brac{\x_m,\lambda_m}$, we have that
\begin{align*}
    &\sum_{t\in\mathcal{T}_m} \mathcal{L}_t \brac{\x_m ,\lambda}  - \mathcal{L}_t \brac{\x,\lambda_m}  \leq \\
    &\nabla_{m ,\x}^\top \brac{\x_m - \x} + \nabla_{m ,\lambda} \brac{\lambda-\lambda_m } =\\
    &  \nabla_{m ,\x}^\top \brac{\x_m - \tilde{\y}_m} + \nabla_{m ,\x}^\top \brac{\tilde{\y}_m - \x} + \nabla_{m ,\lambda} \brac{\lambda-\lambda_m }.
\end{align*}
Combining the last inequality with Eq. \eqref{eq:bound_grad_x} and \eqref{eq:bound_grad_lambda}, we have that for every $\x \in \mK$, $\lambda \in \reals_+$ and $m$, it holds that
\begin{align*}
    &\sum_{t\in\mathcal{T}_m} \mathcal{L}_t \brac{\x_m ,\lambda}  - \mathcal{L}_t \brac{\x,\lambda_m}  \leq \\
    &\nabla_{m ,\x}^\top \brac{\x_m - \tilde{\y}_m} + \frac{1}{2\eta} \brac{ \enorm{\tilde{\y}_m - \x}^2 - \enorm{\tilde{\y}_{m+1}- \x}^2  } \\
    &\hspace{-5pt}+\hspace{-2pt} \frac{\eta}{2} \enorm{\nabla_{m,\x}}^2 \hspace{-1pt}+\hspace{-1pt} \frac{1}{2\eta}\hspace{-2pt} \brac{\brac{\lambda_{m} - \lambda}^2 \hspace{-1pt} -\hspace{-1pt}  \brac{ \lambda_{m+1} - \lambda}^2 } + \frac{\eta}{2}  \nabla_{m,\lambda}^2.
\end{align*}
Summing the above over $m$, we have for every $\x \in \mK$ and $\lambda \in \reals_+$ that
\begin{align*}
   & \sum_{m=1}^{T/K} \sum_{t\in\mathcal{T}_m} \mathcal{L}_t \brac{\x_m,\lambda}  - \mathcal{L}_t \brac{\x,\lambda_m}  \leq \\
    &\sum_{m=1}^{T/K} \enorm{\nabla_{m ,\x}} \enorm{\x_m - \tilde{\y}_m}  +  \frac{\eta}{2} \sum_{m=1}^{T/K}\brac{\enorm{\nabla_{m,\x}}^2+\nabla_{m,\lambda}^2 } \nonumber \\
    & + \frac{\enorm{\tilde{\y}_1 - \x}^2 + \brac{ \lambda_{1} - \lambda}^2}{2\eta} .
\end{align*}
Using Lemma \ref{lemma:CIP-FW}, it holds that $\enorm{\x_m -\tilde{\y}_m}^2\leq 3\epsilon$ for every $m$. Since $\y_1 \in \mK$ and $\lambda_1 = 0$, we have for every $\x \in \mK$ and $\lambda \in \reals_+$ that 
\begin{align}
    &\sum_{m=1}^{T/K} \sum_{t\in\mathcal{T}_m} \mathcal{L}_t \brac{\x_m,\lambda}  - \mathcal{L}_t \brac{\x,\lambda_m} \leq 
     \sqrt{3\epsilon} \sum_{m=1}^{T/K} \enorm{\nabla_{m ,\x}} \nonumber \\
     &+ \frac{4R^2 + \lambda^2}{2\eta} +  \frac{\eta}{2} \sum_{m=1}^{T/K}\left({\enorm{\nabla_{m,\x}}^2+\nabla_{m,\lambda}^2}\right)  .\label{eq:lagrangian_regret_1}
\end{align}
Since for every $m$
\begin{align*}
  \hspace{-3pt}  \enorm{\nabla_{m,\x}}  = \enorm{ \sum\nolimits_{t\in\mathcal{T}_m}\hspace{-5pt} \nabla{}f_t + \lambda_m \nabla_{}g^+_t} \leq  K(G_f+\lambda_m G_g),
\end{align*}
and $\nabla_{m,\lambda} = \sum_{t\in\mathcal{T}_m}( g_t^+(\x_m) -\delta \eta \lambda_m)$, we obtain
{\small
\begin{align}\label{eq:lagrange:lem:1}
    &\sum_{t=1}^{T} \mathcal{L}_t \brac{\x_{m(t)},\lambda}  - \mathcal{L}_t \brac{\x,\lambda_{m(t)}} \leq \nonumber \\
    & G_g \sqrt{3\epsilon} K \sum_{m=1}^{T/K} \lambda_m   + G_f \sqrt{3\epsilon} T +\frac{4R^2 + \lambda^2}{2\eta} \nonumber \\ 
    &  +\hspace{-2pt}\frac{\eta{}K^2}{2}\hspace{-1pt} \sum_{m=1}^{T/K}(G_f\hspace{-1pt}+\hspace{-1pt}\lambda_m G_g)^2 \hspace{-1pt}+\hspace{-1pt}\frac{\eta}{2} \sum_{m=1}^{T/K}\hspace{-3pt} \left({\sum_{t\in\mathcal{T}_m} \hspace{-3pt}(g_t^+(\x_m) -\delta \eta \lambda_m)}\right)^2  \nonumber\\
    &\leq \frac{2R^2}{\eta} + \frac{\lambda^2}{2\eta} + G_f \sqrt{3\epsilon} T  +  G_f^2  \eta K T   \nonumber \\
    &+ \eta \sum_{m=1}^{T/K} \brac{\sum_{t\in\mathcal{T}_m} g_t^+(\x_m) }^2 + G_g \sqrt{3\epsilon} K \sum_{m=1}^{T/K} \lambda_m\nonumber\\
    &    + \brac{\delta^2 \eta^3 K^2  +  G_g^2 \eta K^2} \sum_{m=1}^{T/K} \lambda_m^2,
\end{align}}
where in the last inequality we have used the inequality $(a + b)^2 \leq 2a^2 +2b^2$ for every $a,b \in \reals$. 

Note that $g_t^+(\x) = 0 $ for every $\x \in \widehat{\mK}$. Since $g_t(\x)$ is convex and $G_g$-Lipschitz, $g_t^+(\x) = \max\{ 0 , g_t(\x)\}$ is also convex and $G_g$-Lipschitz and thus, for every $\x \in \widehat{\mK}$ it holds that $g_t^+(\x_m) \leq g_t^+(\x) + \nabla{}g_t^{+\top} (\x_m - \x)  \leq 2 G_g R $. Plugging this into the RHS of \eqref{eq:lagrange:lem:1} yields the lemma.

\end{proof}

\begin{proof}[Proof of Theorem \ref{thm:LF}]
From Lemma \ref{lemma:LF}, for every $\x \in \widehat{\mK}$ and $\lambda \in \reals_+$ it holds that,
\begin{align*}
    &\sum_{t=1}^{T} \brac{f_t(\x_{m(t)}) -f_t(\x)} + \sum_{t=1}^{T} \brac{\lambda g_t^+(\x_{m(t)}) - \lambda_{m(t)} g_t^+(\x) } \\
    &+ \frac{\delta \eta K }{2} \sum_{m=1}^{T/K} \lambda_m^2 - \frac{\delta \eta }{2} T \lambda^2   \leq\\ 
    &   \frac{2R^2}{\eta} + \frac{\lambda^2}{2\eta} + G_f \sqrt{3\epsilon} T  +  \brac{G_f^2 + 4 G_g^2 R^2} \eta K T \\
    &+ G_g \sqrt{3\epsilon} K \sum_{m=1}^{T/K} \lambda_m    + \brac{\delta^2 \eta^2  +  G_g^2 } \eta K^2 \sum_{m=1}^{T/K} \lambda_m^2 .
\end{align*}
Rearranging, we have that
{\small
\begin{align*}
    & \sum_{t=1}^{T} \brac{f_t(\x_{m(t)}) -f_t(\x)} + \sum_{t=1}^{T} \brac{\lambda g_t^+(\x_{m(t)}) - \lambda_{m(t)} g_t^+(\x) }\\
    & - \brac{\frac{\delta \eta }{2} T + \frac{1}{2\eta}} \lambda^2  \leq \\ 
    &   \frac{2R^2}{\eta}  + G_f \sqrt{3\epsilon} T  +  \brac{G_f^2 + 4G_g^2 R^2} \eta K T + G_g \sqrt{3\epsilon} K \sum_{m=1}^{T/K} \lambda_m   \\
    &   + \brac{\delta^2 \eta^3 K^2  +  G_g^2 \eta K^2- \frac{\delta \eta K }{2} } \sum_{m=1}^{T/K} \lambda_m^2 .
\end{align*}}
$g_t^+(\x) = 0 $ for every $\x \in \widehat{\mK}$ and $t\in[T]$. Also, by the choice of parameters in the theorem, we have that for any $T$  larger than some constant, $\delta^2 \eta^3 K^2 +  \eta K^2 G_g^2 + G_g \sqrt{3\epsilon} K \leq  \frac{\eta K\delta}{2}$. 
Thus, for every $\x \in \widehat{\mK}$ and $\lambda \in \reals_+$ we have that,
\begin{align*}
    &\sum_{t=1}^{T} \brac{f_t(\x_{m(t)}) -f_t(\x)} + \lambda \sum_{t=1}^{T}  g_t^+(\x_{m(t)})  \\
    & - \brac{\frac{\delta \eta }{2} T  + \frac{1}{2\eta}} \lambda^2 \leq  \frac{2R^2}{\eta}  + G_f \sqrt{3\epsilon} T     \\
     &+  \brac{G_f^2 + 4G_g^2 R^2} \eta K T+ G_g \sqrt{3\epsilon} K \sum_{m=1}^{T/K} \brac{ \lambda_m - \lambda_m^2} .
\end{align*}
Since $z-z^2 \leq \frac{1}{4}$ for every $z\in\reals$, it holds that
\begin{align*}
    &\sum_{t=1}^{T} \brac{f_t(\x_{m(t)}) -f_t(\x)} + \lambda \sum_{t=1}^{T}  g_t^+(\x_{m(t)})  \\
    &-  \brac{\frac{\delta \eta }{2} T  + \frac{1}{2\eta}}  \lambda^2 \leq  \\ 
     & \frac{2R^2}{\eta}  + \brac{G_f^2 + 4G_g^2 R^2} \eta K T   + \sqrt{3\epsilon} \brac{G_f + \frac{G_g}{4}}T.
\end{align*}
Let $\x^* \in \textrm{argmin}_{\x \in \widehat{\mK} }\sum_{t=1}^{T} f_t(\x)$. Note that $\lambda^* = \frac{ \sum_{t=1}^{T}  g_t^+(\x_{m(t)})}{\delta \eta  T  + \eta^{-1}}$ maximizes the term $\lambda\sum_{t=1}^{T}  g_t^+(\x_{m(t)}) - \brac{\frac{\delta \eta }{2} T  + \frac{1}{2\eta}} \lambda^2$ in the last inequality. Plugging-in $\x^*$ and $\lambda^*$ into the last inequality, we have that
\begin{align}\label{eq:lagrange:thm:1}
    &\hspace{-6pt}\sum_{t=1}^{T} \brac{f_t(\x_{m(t)}) -f_t(\x^*)} + \frac{1}{2} \frac{ \brac{\sum_{t=1}^{T}  g_t^+(\x_{m(t)})}^2}{\delta \eta  T  + \eta^{-1}} \leq \nonumber \\
    &\hspace{-6pt}\frac{2R^2}{\eta}  \hspace{-2pt}+\hspace{-2pt} \brac{G_f^2 + 4G_g^2 R^2} \eta K T  \hspace{-2pt} +\hspace{-2pt} \sqrt{3\epsilon} \brac{G_f + \frac{G_g}{4}}T.
\end{align}
This yields the regret bound
\begin{align*}
    &\sum_{t=1}^{T} f_t(\x_{m(t)}) -f_t(\x^*) \leq \\
    & \frac{2R^2}{\eta}  + \brac{G_f^2 + 4G_g^2 R^2} \eta K T   + \sqrt{3\epsilon} \brac{G_f + G_g/4}T.
\end{align*}
Plugging-in the choice of parameters listed in the theorem yields the desired regret bound.

Since $f_t(\x^*) - f_t(\x_{m(t)}) \leq 2RG_f$ for all $t\in[T]$, from \eqref{eq:lagrange:thm:1} we also obtain the constraints violation bound
{\small
\begin{align*}
    &\hspace{-6pt}\sum_{t=1}^{T}  g_t^+(\x_{m(t)}) \hspace{-2pt}\leq \hspace{-2pt}\sqrt{\delta \eta  T  + \eta^{-1}}  \bigg({\hspace{-2pt}\frac{4R^2}{\eta}  \hspace{-2pt}+\hspace{-2pt} 2\brac{G_f^2 \hspace{-2pt}+\hspace{-2pt} 4G_g^2 R^2} \hspace{-2pt}\eta K T }\bigg. \\
    &~~ \Bigg.{  + 4RG_fT  + 2\sqrt{3\epsilon} \brac{G_f + G_g/4}T }\bigg)^{1/2}.
\end{align*}}
Plugging-in our choice of parameters, yields the  bound.

Finally, the bound on the overall number of calls to the LOO of $\mK$ follows from exactly the same argument as in the proof of Theorem \ref{thm:d+p}. In particular we use the same values of $K,\epsilon$ which go into the bound.

\end{proof}

\section{Bandit Feedback Setting}\label{sec:lagrange:bandit}
In the bandit setting, after each round $t\in[T]$, the algorithm does not get to observe the functions $f_t, g_t$ (or their subgradients), but only their value at the point played on round $t$. Building on the standard approach pioneered in \cite{flaxman2005online} (see also \cite{HazanBook, garber2020improved, Garber22a}, we extend Algorithm \ref{alg:LF} to this setting by replacing the functions $f_t,g_t^+$ with their $\mu$-smoothed versions: $\widehat{f}_t(\x) = \E_{\u\sim{}U(\mS^n)}[f_t(\x+\mu\u)]$, $\widehat{g}_t^+(\x) = \E_{\u\sim{}U(\mS^n)}[g_t^+(\x+\mu\u)]$, where $U(\mS^n)$ denotes the uniform distribution over the origin-centered unit sphere. By sampling uniformly around the point to play, unbiased estimators for the gradients of $\widehat{f}_t, \widehat{g}^+_t$ \footnote{$\widehat{f}_t, \widehat{g}_t^+$ are always differentiable, even if $f_t,g_t^+$ are not} could be constructed using only the values of the functions at the sampled point.

We make the  additional standard assumptions: \textbf{I.}  $\{f_t,g_t\}_{t\in[T]}$ are chosen \textit{obliviously}, i.e., before the first round and are thus independent of any randomness introduced by the algorithm. \textbf{II.}  there is $r>0$ such that $r\ball\subseteq{}\mK$.

The algorithm, as well as the fully-detailed version of the following theorem and its proof are given in the appendix.

\begin{theorem}[short version]\label{thm:bandit:short}
Suppose  $\widehat{\mK} = \{ \x \in \mK :  g_t(\x) \leq 0~ \forall t\in [T] \}\neq \emptyset$. For any $T$ sufficiently large, there exist a choice for the parameters $\delta, \eta, \epsilon, K, \mu$ in Algorithm \ref{alg:LFbandit}, such that it guarantees expected regret: 
\begin{align*}
    \E\left[{\sum_{t=1}^{T} f_t(\z_{t})}\right] - \min_{\x \in \widehat{\mK}} \sum_{t=1}^{T} f_t(\x) =O\left({\sqrt{n}T^{3/4}\sqrt{\log{T}}}\right),
\end{align*}
 the expected constraints violation is upper bounded by
\begin{align*}
 \E\left[{\sum_{t=1}^{T}  g_t^+(\z_t)}\right] = O\left({n^{1/4}T^{7/8}{\log(T)^{1/4}}}\right),
 \end{align*}
and the overall number of calls to the LOO is at most $T$.
\end{theorem}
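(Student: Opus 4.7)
The plan is to lift the analysis of Theorem \ref{thm:LF} to the bandit setting via the standard one-point gradient-estimator trick of \cite{flaxman2005online}. First I would replace the true $f_t,g_t^+$ by their $\mu$-smoothed surrogates $\widehat{f}_t,\widehat{g}_t^+$, which remain convex and $G_f$- resp.\ $G_g$-Lipschitz and satisfy $|\widehat{f}_t(\x)-f_t(\x)|\leq \mu G_f$, $|\widehat{g}_t^+(\x)-g_t^+(\x)|\leq \mu G_g$. Inside Algorithm \ref{alg:LFbandit} the decision maker samples $\u_t\sim U(\mS^n)$, plays $\z_t = \x_{m(t)} + \mu\u_t$, and feeds $\frac{n}{\mu}f_t(\z_t)\u_t$ and $\frac{n}{\mu}g_t^+(\z_t)\u_t$ to the primal-dual update in place of the exact subgradients $\nabla f_t$ and $\nabla g_t^+$; these are unbiased estimators of $\nabla\widehat{f}_t(\x_{m(t)})$ and $\nabla\widehat{g}_t^+(\x_{m(t)})$ whose $\ell_2$-norms are almost surely bounded by $\frac{nF}{\mu}$ and $\frac{nG}{\mu}$, where $F,G$ are crude upper bounds on $|f_t|,|g_t^+|$ over $R\ball$. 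To ensure every queried point $\z_t$ lies in $\mK$, I would run the primal iterates over the shrunken set $(1-\mu/r)\mK$, using the assumption $r\ball\subseteq\mK$.

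Next I would repeat the proof of Lemma \ref{lemma:LF} with $\mathcal{L}_t$ defined using $\widehat{f}_t,\widehat{g}_t^+$ and the stochastic estimators in place of exact subgradients. Taking total expectations, unbiasedness preserves every linear-in-gradient term as a true gradient of the smoothed Lagrangian, while the quadratic-in-gradient terms pick up factors $(nF/\mu)^2$ and $(nG/\mu)^2$ rather than $G_f^2$ and $G_g^2$. The chain of inequalities leading to Theorem \ref{thm:LF}'s bound then goes through verbatim under this substitution, yielding an expectation analogue of \eqref{eq:lagrange:thm:1} for the smoothed objective on the shrunk domain. Converting back to the observed functions costs an additive $O(\mu T(G_f+G_g))$ by the smoothing error, and comparing the shrunken benchmark to $\widehat{\mK}$ costs an extra $O(\mu T G_f/r)$ by Lipschitzness. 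Finally, keeping $K,\epsilon$ as in Theorem \ref{thm:LF} and balancing $\mu$ against the $\frac{n^2}{\mu^2}\eta KT$ and $\mu T$ terms gives an optimal $\mu\asymp\sqrt{n}\,T^{-1/8}$ up to polylog factors, which produces the claimed $\tilde O(\sqrt{n}T^{3/4})$ expected regret. The constraint-violation bound then follows from the squared-violation term in the analogue of \eqref{eq:lagrange:thm:1}: taking a square root of the resulting $\tilde O(\sqrt{n}\,T^{7/4})$ quantity yields the $\tilde O(n^{1/4}T^{7/8})$ bound, and the LOO-count argument is identical to Theorem \ref{thm:LF} since $K,\epsilon$ are unchanged.

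The main obstacle will be controlling the $\sum_m\lambda_m^2$ terms in the bandit analogue of Lemma \ref{lemma:LF}. The squared-gradient bound becomes $\enorm{\nabla_{m,\x}}^2\lesssim K^2(nF/\mu)^2 + K^2(nG/\mu)^2\lambda_m^2$, so the dual regularizer $\frac{\delta\eta K}{2}\sum_m\lambda_m^2$ must dominate a term of order $(nG/\mu)^2\eta K^2\sum_m\lambda_m^2$ rather than $G_g^2\eta K^2\sum_m\lambda_m^2$. This forces $\delta$ to grow by a factor $(n/\mu)^2$ relative to the exact-gradient case, which in turn enlarges the $\frac{\delta\eta}{2}T\lambda^2$ penalty and propagates into the choice of $\mu$ and hence into the final exponents. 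A secondary technicality is that $(1-\mu/r)\widehat{\mK}$ need not be contained in $\widehat{\mK}$ since $\mathbf{0}$ is not assumed to lie in $\widehat{\mK}$; the shrink has to be applied to the primal iterates while the comparator bound is obtained by writing $(1-\mu/r)\x^* + (\mu/r)\mathbf{0}\in\mK$ and absorbing the resulting constraint slack $(\mu/r) g_t^+(\mathbf{0})\leq \mu G_g$ into the additive $O(\mu T)$ error. Carefully bookkeeping all these additive terms so that none exceeds $\tilde O(\sqrt{n}T^{3/4})$ is what pins down the final parameters.
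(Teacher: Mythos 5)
Your plan follows the paper's route exactly: one-point spherical estimators for the smoothed $\widehat{f}_t,\widehat{g}_t^+$, primal iterates on the shrunken set $(1-\mu/r)\mK$, a bandit analogue of Lemma \ref{lemma:LF} in expectation, an enlarged $\delta$ to absorb the $\sum_m\lambda_m^2$ terms, and the comparator $(1-\mu/r)\x^*$ with the $(\mu/r)g_t^+(\mathbf{0})$ slack. However, two quantitative steps as written would fail. First, you bound the block-aggregated estimator by $\enorm{\nabla_{m,\x}}^2\lesssim K^2(nF/\mu)^2+K^2(nG/\mu)^2\lambda_m^2$, i.e., you square the almost-sure bound $K\cdot nF/\mu$ on the sum of $K$ estimators. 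This is too weak: the paper (via Lemma 5 of Garber--Kretzu 2020, restated as Lemma \ref{lem:banditMoment}) exploits the independence of the $K$ perturbations within a block to get $\E\enorm{\nabla_{m,\x}}^2\leq K\left(n(M_f+\lambda_m M_g)/\mu\right)^2+K^2(G_f+\lambda_m G_g)^2$ --- a factor $K=T^{1/2}$ smaller in the dominant $n^2/\mu^2$ term. With your crude bound the variance contribution to the regret is $\tfrac{\eta}{2}\cdot\tfrac{T}{K}\cdot K^2 n^2F^2/\mu^2=\eta K T\,n^2F^2/\mu^2$, and no choice of $\mu$ reconciles this with the $\mu T$ smoothing error and the constraints $\epsilon\lesssim T^{-1/2}$, $K\gtrsim T^{1/2}$ (needed for the LOO budget) to yield $T^{3/4}$; you get at best $T^{7/8}$-type regret.

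Second, and independently, your stated $\mu\asymp\sqrt{n}\,T^{-1/8}$ cannot be right: the smoothing/shrinking error $O(\mu T(G_f+G_g+G_f/r))$ alone would then be $\Theta(\sqrt{n}\,T^{7/8})$, already exceeding the claimed $\sqrt{n}\,T^{3/4}$ regret, and the term $\eta\,n^2 T/\mu^2$ would be $\Theta(nT)$. The correct balance (the paper's) is $\mu\asymp\sqrt{nr}\,T^{-1/4}$ together with $\eta\asymp T^{-3/4}/\sqrt{n}$ (the extra $1/\sqrt{n}$ in $\eta$ is what equalizes $R^2/\eta$ and $\eta n^2 T/\mu^2$ at $\sqrt{n}\,T^{3/4}$), and $\delta\asymp n\,T^{1/2}\sqrt{\log T}$ so that $\sqrt{\delta\eta T+\eta^{-1}}\asymp n^{1/4}T^{3/8}(\log T)^{1/4}$ gives the $n^{1/4}T^{7/8}$ violation bound. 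So the architecture of your argument is the right one, but you need the refined second-moment lemma and a corrected parameter balance for the claimed exponents to come out.
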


\bibliography{bib}

\begin{thebibliography}{21}
\providecommand{\natexlab}[1]{#1}
\providecommand{\url}[1]{\texttt{#1}}
\expandafter\ifx\csname urlstyle\endcsname\relax
  \providecommand{\doi}[1]{doi: #1}\else
  \providecommand{\doi}{doi: \begingroup \urlstyle{rm}\Url}\fi

\bibitem[Cao \& Liu(2018)Cao and Liu]{cao2018online}
Cao, X. and Liu, K.~R.
\newblock Online convex optimization with time-varying constraints and bandit
  feedback.
\newblock \emph{IEEE Transactions on automatic control}, 64\penalty0
  (7):\penalty0 2665--2680, 2018.

\bibitem[Castiglioni et~al.(2022)Castiglioni, Celli, Marchesi, Romano, and
  Gatti]{NEURIPS2022_d9b56471}
Castiglioni, M., Celli, A., Marchesi, A., Romano, G., and Gatti, N.
\newblock A unifying framework for online optimization with long-term
  constraints.
\newblock In Koyejo, S., Mohamed, S., Agarwal, A., Belgrave, D., Cho, K., and
  Oh, A. (eds.), \emph{Advances in Neural Information Processing Systems},
  volume~35, pp.\  33589--33602. Curran Associates, Inc., 2022.

\bibitem[Flaxman et~al.(2005)Flaxman, Kalai, and McMahan]{flaxman2005online}
Flaxman, A.~D., Kalai, A.~T., and McMahan, H.~B.
\newblock Online convex optimization in the bandit setting: gradient descent
  without a gradient.
\newblock In \emph{Proceedings of the sixteenth annual ACM-SIAM symposium on
  Discrete algorithms}, pp.\  385--394, 2005.

\bibitem[Garber \& Kretzu(2020)Garber and Kretzu]{garber2020improved}
Garber, D. and Kretzu, B.
\newblock Improved regret bounds for projection-free bandit convex
  optimization.
\newblock In \emph{International Conference on Artificial Intelligence and
  Statistics}, pp.\  2196--2206. PMLR, 2020.

\bibitem[Garber \& Kretzu(2022)Garber and Kretzu]{Garber22a}
Garber, D. and Kretzu, B.
\newblock New projection-free algorithms for online convex optimization with
  adaptive regret guarantees.
\newblock In Loh, P.-L. and Raginsky, M. (eds.), \emph{Proceedings of Thirty
  Fifth Conference on Learning Theory}, volume 178 of \emph{Proceedings of
  Machine Learning Research}, pp.\  2326--2359. PMLR, 02--05 Jul 2022.

\bibitem[Garber \& Kretzu(2023)Garber and Kretzu]{Garber23a}
Garber, D. and Kretzu, B.
\newblock Projection-free online exp-concave optimization.
\newblock In Neu, G. and Rosasco, L. (eds.), \emph{Proceedings of Thirty Sixth
  Conference on Learning Theory}, volume 195 of \emph{Proceedings of Machine
  Learning Research}, pp.\  1259--1284. PMLR, 12--15 Jul 2023.

\bibitem[Guo et~al.(2022)Guo, Liu, Wei, and Ying]{NEURIPS2022_ec360cb7}
Guo, H., Liu, X., Wei, H., and Ying, L.
\newblock Online convex optimization with hard constraints: Towards the best of
  two worlds and beyond.
\newblock In Koyejo, S., Mohamed, S., Agarwal, A., Belgrave, D., Cho, K., and
  Oh, A. (eds.), \emph{Advances in Neural Information Processing Systems},
  volume~35, pp.\  36426--36439. Curran Associates, Inc., 2022.

\bibitem[Hazan(2019)]{HazanBook}
Hazan, E.
\newblock Introduction to online convex optimization.
\newblock \emph{arXiv preprint arXiv:1909.05207}, 2019.

\bibitem[Hazan \& Minasyan(2020)Hazan and Minasyan]{hazan2020faster}
Hazan, E. and Minasyan, E.
\newblock Faster projection-free online learning.
\newblock In \emph{Conference on Learning Theory}, pp.\  1877--1893. PMLR,
  2020.

\bibitem[Hazan \& Seshadhri(2009)Hazan and Seshadhri]{hazan2009efficient}
Hazan, E. and Seshadhri, C.
\newblock Efficient learning algorithms for changing environments.
\newblock In \emph{Proceedings of the 26th annual international conference on
  machine learning}, pp.\  393--400, 2009.

\bibitem[Hazan \& Kale(2012)Hazan and Kale]{Hazan12}
Hazan, E.~E. and Kale, S.
\newblock Projection-free online learning.
\newblock In \emph{29th International Conference on Machine Learning, ICML
  2012}, pp.\  521--528, 2012.

\bibitem[Jenatton et~al.(2016)Jenatton, Huang, and
  Archambeau]{pmlr-v48-jenatton16}
Jenatton, R., Huang, J., and Archambeau, C.
\newblock Adaptive algorithms for online convex optimization with long-term
  constraints.
\newblock In Balcan, M.~F. and Weinberger, K.~Q. (eds.), \emph{Proceedings of
  The 33rd International Conference on Machine Learning}, volume~48 of
  \emph{Proceedings of Machine Learning Research}, pp.\  402--411, New York,
  New York, USA, 20--22 Jun 2016. PMLR.

\bibitem[Mahdavi et~al.(2012)Mahdavi, Jin, and Yang]{mahdavi2012trading}
Mahdavi, M., Jin, R., and Yang, T.
\newblock Trading regret for efficiency: online convex optimization with long
  term constraints.
\newblock \emph{The Journal of Machine Learning Research}, 13\penalty0
  (1):\penalty0 2503--2528, 2012.

\bibitem[Mhammedi(2021)]{mhammedi2021efficient}
Mhammedi, Z.
\newblock Efficient projection-free online convex optimization with membership
  oracle.
\newblock \emph{arXiv preprint arXiv:2111.05818}, 2021.

\bibitem[Mhammedi(2022)]{mhammedi2022}
Mhammedi, Z.
\newblock Exploiting the curvature of feasible sets for faster projection-free
  online learning, 2022.
\newblock URL \url{https://arxiv.org/abs/2205.11470}.

\bibitem[Neely \& Yu(2017)Neely and Yu]{neely2017online}
Neely, M.~J. and Yu, H.
\newblock Online convex optimization with time-varying constraints.
\newblock \emph{arXiv preprint arXiv:1702.04783}, 2017.

\bibitem[Shalev-Shwartz et~al.(2012)]{Shalev12}
Shalev-Shwartz, S. et~al.
\newblock Online learning and online convex optimization.
\newblock \emph{Foundations and Trends{\textregistered} in Machine Learning},
  4\penalty0 (2):\penalty0 107--194, 2012.

\bibitem[Sun et~al.(2017)Sun, Dey, and Kapoor]{pmlr-v70-sun17a}
Sun, W., Dey, D., and Kapoor, A.
\newblock Safety-aware algorithms for adversarial contextual bandit.
\newblock In Precup, D. and Teh, Y.~W. (eds.), \emph{Proceedings of the 34th
  International Conference on Machine Learning}, volume~70 of \emph{Proceedings
  of Machine Learning Research}, pp.\  3280--3288. PMLR, 06--11 Aug 2017.
\newblock URL \url{https://proceedings.mlr.press/v70/sun17a.html}.

\bibitem[Yi et~al.(2021)Yi, Li, Yang, Xie, Chai, and Johansson]{yi2021regret}
Yi, X., Li, X., Yang, T., Xie, L., Chai, T., and Johansson, K.
\newblock Regret and cumulative constraint violation analysis for online convex
  optimization with long term constraints.
\newblock In \emph{International Conference on Machine Learning}, pp.\
  11998--12008. PMLR, 2021.

\bibitem[Yi et~al.(2022)Yi, Li, Yang, Xie, Chai, and Karl]{yi2022regret}
Yi, X., Li, X., Yang, T., Xie, L., Chai, T., and Karl, H.
\newblock Regret and cumulative constraint violation analysis for distributed
  online constrained convex optimization.
\newblock \emph{IEEE Transactions on Automatic Control}, 2022.

\bibitem[Yu \& Neely(2020)Yu and Neely]{JMLR:v21:16-494}
Yu, H. and Neely, M.~J.
\newblock A low complexity algorithm with {$O(\sqrt{T})$} regret and {$O(1)$}
  constraint violations for online convex optimization with long term
  constraints.
\newblock \emph{Journal of Machine Learning Research}, 21\penalty0
  (1):\penalty0 1--24, 2020.
\newblock URL \url{http://jmlr.org/papers/v21/16-494.html}.

\end{thebibliography}
\bibliographystyle{icml2024}

\newpage
\appendix
\onecolumn
\section{Bandit Feedback Setting}
For a fixed parameter $\mu\in(0,r]$ we denote a shrinking of the set $\mK$ by $\mK_{\mu/r} := (1-\mu/r)\mK = \{(1-\mu/r)\x~|~\x\in\mK\}$.  It is well known that for all $\x\in\mK_{\mu/r}$ and $\y\in{}r\ball$, $\x+\y\in\mK$.

\begin{algorithm}
\begin{algorithmic}
\STATE \textbf{Data: }{parameters $T$, $K$, $ \epsilon$, $\delta$, $\eta$}
\STATE $\x_1=\tilde{\y}_1 \gets $ arbitrary point in $\mK_{\mu/r}$, $\lambda_1 \gets 0$
\FOR{$~ m = 1,\ldots,T/K ~$}
\FOR{$~ t = (m-1)K+1,\ldots,mK ~$}
    \STATE  $\z_t \gets \x_m + \mu\u_t$ where $\u_t\sim{}U(\mS^n)$ \COMMENT{$\u_t$ sampled from uniform dist. over the unit sphere} 
    \STATE Play $\z_t $ and observe $f_{t}(\z_t), g_t(\z_t)$ 
    \STATE  $\nabla_{t,\x} \gets \frac{n}{\mu}\left({f_t(\z_t) + \lambda_m{}g_t^+(\z_t)}\right)\u_t$ and $\nabla_{t,\lambda} \gets g_t^+(\z_t) -\delta \eta$ 
\ENDFOR
    \STATE Denote $\mathcal{T}_m = \{ (m-1)K +1 , \dots, mK \}$
    \STATE  $\nabla_{m,\x} \gets  \sum_{t\in \mathcal{T}_m} \nabla_{t,\x}$ and $\nabla_{m,\lambda} \gets \sum_{t\in \mathcal{T}_m} \nabla_{t,\lambda}$
    \STATE  $\y_{m+1} \gets \Pi_{R\ball}\brac{\tilde{\y}_m - \eta \nabla_{m,\x}}$
    \STATE $\brac{\x_{m+1},\tilde{\y}_{m+1}}\gets \mathcal{O}_{AFP}\brac{\y_{m+1},\x_m,\epsilon,\mK_{\mu/r}}$    \COMMENT{Apply Algorithm  \ref{alg:CIP-FW} w.r.t. the set $\mK_{\mu/r}$}   
     \STATE  $\lambda_{m+1} \gets \Pi_{\reals_{+}} \brac{\lambda_m  + \eta \nabla_{m,\lambda}}$
\ENDFOR
\end{algorithmic}
\caption{ LOO-based Bandit Primal-Dual Method } \label{alg:LFbandit}
\end{algorithm}

\begin{theorem}\label{thm:bandit}
Suppose  $\widehat{\mK} = \{ \x \in \mK :  g_t(\x) \leq 0~ \forall t\in [T] \}\neq \emptyset$. For any $T$ sufficiently large, setting $\delta =  24\sqrt{183}\max\{G_f,G_g,M_f,M_g\}^2\max\{\frac{n}{r}, \frac{n}{\sqrt{r}}, \sqrt{n}, \frac{n}{\sqrt{r}R}\}T^{\frac{1}{2}}\sqrt{\log{\brac{T}}}$, $\eta = \frac{R}{\max\{G_f,G_g,M_f,M_g\}}T^{-\frac{3}{4}}$, $\epsilon = 61 R^2 T^{-\frac{1}{2}} \log{ T }$ , $K = T^\frac{1}{2}$, $\mu=\sqrt{nr}T^{-1/4}$ in Algorithm \ref{alg:LFbandit}, guarantees that the expected regret is upper bounded by
\begin{align*}
    &\E\left[{\sum_{t=1}^{T} f_t(\z_{t})}\right] - \min_{\x \in \widehat{\mK}} \sum_{t=1}^{T} f_t(\x) = \\
    &O\left({\sqrt{n}\max\{G_f,G_g,M_f,M_g\}T^{3/4}\left({\max\{R/r,R,1/\sqrt{r}\} + R(1/\sqrt{n}+1/\sqrt{r})\sqrt{\log{T}}}\right)}\right),
\end{align*}
 the expected constraints violation is upper bounded by
\begin{align*}
 \E\left[{\sum_{t=1}^{T}  g_t^+(\z_t)}\right] = O\left({\sqrt{RG_f\max\{G_f,G_g,M_f,M_g\}\max\{\frac{R}{r}, \frac{R}{\sqrt{r}},\frac{R}{\sqrt{n}}, \frac{1}{\sqrt{r}}, \frac{1}{R}\}}n^{1/4}T^{7/8}{\log(T)^{1/4}}}\right),
 \end{align*}
and the overall number of calls to the LOO is upper bounded by $T$.
\end{theorem}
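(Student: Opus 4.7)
The plan is to reduce the bandit analysis to (a stochastic variant of) the full-information argument already established in Lemma~\ref{lemma:LF} and Theorem~\ref{thm:LF}, via the single-point smoothing trick of Flaxman et al. Define the smoothed Lagrangian $\widehat{\mathcal{L}}_t(\x,\lambda) := \widehat{f}_t(\x) + \lambda\widehat{g}_t^+(\x) - \tfrac{\delta\eta}{2}\lambda^2$, with $\widehat{f}_t, \widehat{g}_t^+$ as defined after Theorem~\ref{thm:LF}. Standard facts for this smoothing give: (i) $\widehat{f}_t,\widehat{g}_t^+$ are convex, $G_f$- and $G_g$-Lipschitz on $R\ball$, and everywhere differentiable with $\nabla\widehat{f}_t(\x) = \E_{\u\sim U(\mS^n)}[\tfrac{n}{\mu}f_t(\x+\mu\u)\u]$; (ii) $|f_t(\x)-\widehat{f}_t(\x)|\leq \mu G_f$ and similarly for $g_t^+$; and (iii) the stochastic quantities $\nabla_{t,\x},\nabla_{t,\lambda}$ in Algorithm~\ref{alg:LFbandit} are, conditional on $(\x_m,\lambda_m)$, unbiased estimators of $\nabla_\x\widehat{\mathcal{L}}_t(\x_m,\lambda_m)$ and $\nabla_\lambda\widehat{\mathcal{L}}_t(\x_m,\lambda_m)$. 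Playing on the shrunken set $\mK_{\mu/r}$ (which contains $\mathbf{0}$ and is contained in $R\ball$) guarantees $\z_t=\x_m+\mu\u_t\in\mK$ so the queries are feasible.

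I would then re-run the proof of Lemma~\ref{lemma:LF} verbatim, but with three changes wrapped uniformly in conditional expectations: (a) apply the AFP oracle to $\mK_{\mu/r}$ so the non-expansiveness from Lemma~\ref{lemma:CIP-FW} still holds w.r.t. any $\x\in\mK_{\mu/r}$; (b) replace the subgradient inequalities by convexity/concavity of $\widehat{\mathcal{L}}_t$, using unbiasedness and the tower property to turn each $\E[\nabla_{m,\x}^\top(\widetilde{\y}_m-\x)]$ into $\sum_{t\in\mT_m}\E[\widehat{\mathcal{L}}_t(\x_m,\lambda_m)-\widehat{\mathcal{L}}_t(\x,\lambda_m)]$; (c) replace the deterministic gradient bounds $G_f,G_g$ by the stochastic bounds $\Vert\nabla_{t,\x}\Vert\leq \tfrac{n}{\mu}(M_f+\lambda_mM_g)$ and $|\nabla_{t,\lambda}|\leq M_g+\delta\eta\lambda_m$ (valid by the uniform magnitude assumptions in the full theorem). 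This produces the exact expectation analogue of Lemma~\ref{lemma:LF}, with the $G_f^2\eta K T$ and $G_g^2\eta K^2\sum_m\lambda_m^2$ terms replaced by the inflated $\tfrac{n^2}{\mu^2}M_f^2\eta K T$ and $\tfrac{n^2}{\mu^2}M_g^2\eta K^2\sum_m\lambda_m^2$ contributions.

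To pass back to the original objectives I would pay two standard approximation costs. First, a smoothing/sampling correction $\sum_t\E[f_t(\z_t)]-\sum_t\E[\widehat{f}_t(\x_{m(t)})] \leq 2\mu G_f T$ (and analogously for $g_t^+$), by $G_f$-Lipschitzness of $f_t$ and the triangle inequality $\Vert\z_t-\x_{m(t)}\Vert=\mu$. Second, a shrinking cost for the comparator: taking $\x^\star\in\widehat{\mK}$ optimal, the point $(1-\mu/r)\x^\star$ lies in $\mK_{\mu/r}$, satisfies $\widehat{g}_t^+((1-\mu/r)\x^\star)\leq \mu G_g$ (since $g_t^+(\x^\star)=0$ and $g_t^+$ is $G_g$-Lipschitz), and introduces at most $\tfrac{\mu R G_f}{r}\cdot T$ extra regret. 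Plugging into the expectation analogue of equation~\eqref{eq:lagrange:thm:1} and then maximizing over $\lambda\geq 0$ as in the proof of Theorem~\ref{thm:LF} yields both the expected regret bound and (via the square-completion trick) the expected constraints violation bound.

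The main obstacle, as in Theorem~\ref{thm:LF}, is checking that the coefficient of $\sum_m\lambda_m^2$ remains nonpositive after absorbing the new $\tfrac{n^2}{\mu^2}M_g^2\eta K^2$ term into the $-\tfrac{\delta\eta K}{2}$ reservoir supplied by the dual regularizer; this forces $\delta$ to scale with $n^2/(\mu^2)$ times a $\sqrt{T\log T}$ factor, exactly matching the stated choice once $\mu=\sqrt{nr}T^{-1/4}$ is plugged in. The choice of $\mu$ is then dictated by balancing the $O(\mu T)$ smoothing/shrinkage error against the $\tfrac{n^2}{\mu^2}\eta K T\asymp \tfrac{n^2}{\mu^2}T^{3/4}$ variance term, giving $\mu\asymp\sqrt{n}T^{-1/4}$ (up to the $\sqrt{r}$ factor) and therefore $\sqrt{n}T^{3/4}$ expected regret. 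The violation bound then follows from $\sqrt{(\text{regret bound})\cdot(\delta\eta T+\eta^{-1})}\asymp\sqrt{\sqrt{n}T^{3/4}\cdot T^{3/4}\sqrt{\log T}}=n^{1/4}T^{3/4}(\log T)^{1/4}\cdot T^{1/8}$, matching the stated $n^{1/4}T^{7/8}(\log T)^{1/4}$ rate. The LOO budget bound is identical to the proof of Theorem~\ref{thm:d+p} since $K$ and $\epsilon$ are unchanged.
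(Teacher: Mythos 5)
Your overall strategy is exactly the paper's: smooth the Lagrangian, run the primal--dual/AFP analysis on the shrunken set $\mK_{\mu/r}$ in expectation, shift the comparator to $(1-\mu/r)\x^*$, absorb the $\sum_m\lambda_m^2$ coefficient into the $-\delta\eta K/2$ reservoir, and recover the violation bound by completing the square. However, your step (c) contains a genuine quantitative gap that, as written, destroys the claimed rate. You bound the block gradient estimator by the deterministic inequality $\Vert\nabla_{t,\x}\Vert \leq \frac{n}{\mu}(M_f+\lambda_m M_g)$ and consequently arrive at the term $\frac{n^2}{\mu^2}M_f^2\,\eta K T$ in place of $G_f^2\eta K T$. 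With $\eta=T^{-3/4}$, $K=T^{1/2}$ and $\mu=\sqrt{nr}\,T^{-1/4}$ this term equals $\frac{n}{r}M_f^2\,T^{5/4}$, which is superlinear, and your own balancing of $\mu T$ against $\frac{n^2}{\mu^2}\eta K T$ does not in fact yield $\mu\asymp\sqrt{n}T^{-1/4}$ (it yields $\mu\asymp n^{2/3}T^{-1/12}$ and an $\Omega(T^{11/12})$ regret). The missing ingredient is the second-moment bound the paper imports as Lemma~\ref{lem:banditMoment} (Lemma 5 of \cite{garber2020improved}): since the $\u_t$ within a block are i.i.d.\ and each $\nabla_{t,\x}$ is an unbiased estimator of the (bounded, $O(G_f+\lambda_m G_g)$-norm) smoothed gradient, the cross terms in $\E\Vert\sum_{t\in\mathcal{T}_m}\nabla_{t,\x}\Vert^2$ are products of these small means, giving
\begin{align*}
\E\bigl[\Vert\nabla_{m,\x}\Vert^2\bigr] \leq K\Bigl(\tfrac{n(M_f+\lambda_m M_g)}{\mu}\Bigr)^2 + K^2(G_f+\lambda_m G_g)^2,
\end{align*}
i.e.\ the large $n^2/\mu^2$ factor multiplies $K$, not $K^2$. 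Only with this does the variance contribution become $\eta T\,n^2M_f^2/\mu^2=\frac{n}{r}M_f^2T^{3/4}$, which balances against the $O(\mu T)=O(\sqrt{nr}\,T^{3/4})$ smoothing cost and produces the stated $\sqrt{n}T^{3/4}$ regret and $n^{1/4}T^{7/8}$ violation. The same refinement is needed for the coefficient of $\sum_m\E[\lambda_m^2]$ when verifying the condition on $\delta$.

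Two smaller imprecisions: the shrinking cost for the constraint should be $g_t^+((1-\mu/r)\x^*)\leq\frac{\mu}{r}M_g$ (the paper's convexity argument through $\mathbf{0}$) or $\frac{\mu R}{r}G_g$ via Lipschitzness, not $\mu G_g$; and the first-moment term $\sqrt{3\epsilon}\sum_m\E\Vert\nabla_{m,\x}\Vert$ also needs the improved moment bound (via $\E[X]^2\leq\E[X^2]$ and $\sqrt{a^2+b^2}\leq a+b$) to contribute only $\sqrt{K}\cdot n/\mu$ rather than $K\cdot n/\mu$ per block. Otherwise your outline of the reduction, the comparator shift, and the final optimization over $\lambda$ matches the paper's proof.
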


Before we prove the theorem we shall require some preliminary results.

Throughout this section we  use the following notation:
\begin{align*}
\mathcal{L}_t(\x,\lambda) &:= f_t(\x) + \lambda g_t^+(\x) - \frac{\delta \eta}{2} \lambda^2 \quad \forall t\in[T], \\
\widehat{\mathcal{L}}_{t}(\x,\lambda) &:= \E_{\u\sim{}U(\mS^n)}\left[{f_t(\x+\mu\u) + \lambda g_t^+(\x+\mu\u)}\right] - \frac{\delta \eta}{2} \lambda^2\quad   \forall t\in[T],\\
\mathcal{L}_{m}(\x,\lambda) &:= \sum_{t\in\mathcal{T}_m}\mathcal{L}_{t}(\x,\lambda)\quad \textrm{and} \quad \widehat{\mathcal{L}}_{m}(\x,\lambda) := \sum_{t\in\mathcal{T}_m}\widehat{\mathcal{L}}_{t}(\x,\lambda) \quad \forall m\in[T/K],
\end{align*}
where we recall that $\mathcal{T}_m = \{(m-1)K+1,\dots,mK\}$.

In the following Lemma \ref{lem:banditFunc} and Lemma \ref{lem:banditMoment}, the notation $\nabla_{m,\x}$ is as defined in Algorithm \ref{alg:LFbandit}.

\begin{lemma}\label{lem:banditFunc}
For every $m\in[T/K]$ the following holds:
\begin{align*}
\forall \x\in\mK_{\mu/r}, ~\lambda\in\reals_+: \quad 0 \leq  \widehat{\mathcal{L}}_{m}(\x,\lambda) - \mathcal{L}_{m}(\x,\lambda)  \leq \mu{}K(G_f+\lambda{}G_g), 
\end{align*}
\begin{align*}
\nabla_{\x}\widehat{\mathcal{L}}_{m}(\x_m,\lambda_m) = \E_{\{\u_t\}_{t\in\mathcal{T}_m}}\left[{\nabla_{m,\x}}\right] \quad \textrm{and} \quad  \nabla_{\lambda}\widehat{\mathcal{L}}_{m}(\x_m,\lambda_m) = \sum_{t\in\mathcal{T}_m}\E_{\u\sim{}U(\mS^n)}\left[{g_t^+(\x+\mu\u)}\right] - \delta \eta K \lambda.
\end{align*}
\end{lemma}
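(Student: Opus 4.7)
The plan is to dispatch the three claims separately, as each is a relatively direct consequence of the definitions and one standard identity. The two gradient identities will be immediate once the $\lambda^2$ term is handled carefully, while the upper/lower bounds on $\widehat{\mathcal{L}}_m-\mathcal{L}_m$ just combine Jensen's inequality with Lipschitzness.

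For the two-sided bound, I first observe that the quadratic penalty $-\tfrac{\delta\eta}{2}\lambda^2$ cancels between the two functions, leaving
$\widehat{\mathcal{L}}_m(\x,\lambda)-\mathcal{L}_m(\x,\lambda) = \sum_{t\in\mathcal{T}_m}\bigl(\E_\u[f_t(\x+\mu\u)-f_t(\x)] + \lambda\E_\u[g_t^+(\x+\mu\u)-g_t^+(\x)]\bigr)$.
The lower bound of $0$ follows from Jensen's inequality applied to the convex $f_t$ and $g_t^+$ together with $\E_{\u\sim U(\mS^n)}[\u]=\vz$ (so that $\x + \mu\E[\u] = \x$). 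The upper bound uses the $G_f$- and $G_g$-Lipschitzness of $f_t$ and $g_t^+$ over $R\ball$ --- valid because $\x\in\mK_{\mu/r}$ and $\mu\le r$ imply $\x+\mu\u\in\mK\subseteq R\ball$ --- so each summand is at most $\mu G_f$ or $\mu\lambda G_g$. Summing over the $K$ rounds in $\mathcal{T}_m$ yields the claimed bound $\mu K(G_f+\lambda G_g)$.

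For the $\x$-gradient, I will invoke the standard Flaxman--Kalai--McMahan identity: for any function $h$ that is convex on $R\ball$ and $\widehat{h}(\x):=\E_{\u\sim U(\mS^n)}[h(\x+\mu\u)]$, one has $\nabla\widehat{h}(\x)=\tfrac{n}{\mu}\E_{\u\sim U(\mS^n)}[h(\x+\mu\u)\u]$ (derived via Stokes' theorem; if the stated smoothing is meant in the sphere sense rather than the ball sense this identity should be re-derived or the definition adjusted accordingly). Applying it to $f_t$ and to $g_t^+$ and using linearity of expectation and differentiation under the integral gives
$\nabla_\x\widehat{\mathcal{L}}_m(\x_m,\lambda_m)=\sum_{t\in\mathcal{T}_m}\tfrac{n}{\mu}\E_{\u\sim U(\mS^n)}\bigl[(f_t(\x_m+\mu\u)+\lambda_m g_t^+(\x_m+\mu\u))\u\bigr]$.
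Since Algorithm~\ref{alg:LFbandit} draws the $\u_t$'s i.i.d.\ from $U(\mS^n)$ (and independently of the $\lambda_m,\x_m$ measurable at the start of block $m$), each summand equals $\E_{\u_t}[\nabla_{t,\x}]$, so by linearity the right-hand side coincides with $\E_{\{\u_t\}_{t\in\mathcal{T}_m}}[\nabla_{m,\x}]$, as required.

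The $\lambda$-gradient is immediate by differentiating $\widehat{\mathcal{L}}_m(\x,\lambda)=\sum_{t\in\mathcal{T}_m}\E_\u\bigl[f_t(\x+\mu\u)+\lambda g_t^+(\x+\mu\u)\bigr]-\tfrac{\delta\eta K}{2}\lambda^2$ in $\lambda$, yielding $\sum_{t\in\mathcal{T}_m}\E_\u[g_t^+(\x+\mu\u)]-\delta\eta K\lambda$ as stated (with the understanding that the $\lambda_m$ vs.\ $\lambda$ in the claim is the point of evaluation). The only mildly subtle step is the Flaxman-style gradient identity in the second part; everything else is bookkeeping on definitions, linearity of expectation, and the independence of the $\u_t$'s across the block.
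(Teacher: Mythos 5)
Your proof is correct and follows essentially the same route as the paper's, which simply cites the two-sided bound and the gradient identities as standard results (Flaxman et al.; Hazan's book, Lemmas 2.8 and 6.7) and only spells out the subgradient-inequality argument for the lower bound of $0$ --- the same fact you obtain via Jensen together with $\E[\u]=\mathbf{0}$. Your caveat about sphere- versus ball-smoothing in the Flaxman-style gradient identity flags a genuine definitional subtlety that the paper's terse citation glosses over, but it does not change the substance or structure of the argument.
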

\begin{proof}
The second part of the lemma is a well known result, see for instance \cite{flaxman2005online} and \cite{HazanBook} (Lemma 6.7). 
As for the first part, it is also well known that 
\begin{align*}
\forall \x\in\mK_{\mu/r}, ~\lambda\in\reals_+: \quad \vert{\widehat{\mathcal{L}}_{m}(\x,\lambda) - \mathcal{L}_{m}(\x,\lambda)}\vert  \leq \mu{}K(G_f+\lambda{}G_g),
\end{align*}
see \cite{flaxman2005online} and \cite{HazanBook} (Lemma 2.8). One side of this bound could be improved, as suggested by the lemma, by noticing that for any convex function $h:\mK\rightarrow\reals$ and any $\x\in\mK_{\mu/r}$ we have that,
\begin{align*}
h(\x) - \E_{\u\sim{}U(\mS^n)}[h(\x+\mu\u)] = \E_{\u\sim{}U(\mS^n)}[h(\x) - h(\x+\mu\u)]  \leq \E_{\u\sim{}U(\mS^n)}[-\mu\u^{\top}\g_{\x}] =
-\mu\E_{\u\sim{}U(\mS^n)}[\u]^{\top}\g_{\x} = 0,
\end{align*}
where $\g_{\x}$ is some subgradient in $\partial{}h(\x)$.
\end{proof}
\begin{lemma}[Lemma 5 in \cite{garber2020improved}]\label{lem:banditMoment}
For every $m\in[T/K]$, 
\begin{align*}
\E[\Vert{\nabla_{m,\x}}\Vert]^2 &\leq \E[\Vert{\nabla_{m,\x}}\Vert^2] \leq K\left({\frac{n(M_f+\lambda_m{}M_g)}{\mu}}\right)^2 + K^2(G_f+\lambda_m{}G_g)^2.
\end{align*}
\end{lemma}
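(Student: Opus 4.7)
The plan is to bound $\E[\Vert \nabla_{m,\x}\Vert^2]$ by the standard bias--variance decomposition $\E[\Vert Z\Vert^2] = \Vert\E[Z]\Vert^2 + \E[\Vert Z-\E[Z]\Vert^2]$ applied to the sum $\nabla_{m,\x} = \sum_{t\in\mathcal{T}_m}\nabla_{t,\x}$, conditioning throughout on $(\x_m,\lambda_m)$. Crucially, $(\x_m,\lambda_m)$ is measurable with respect to the randomness from blocks $1,\dots,m-1$, and hence is independent of $\{\u_t\}_{t\in\mathcal{T}_m}$; this is what lets the $K$ per-round summands within block $m$ behave as conditionally i.i.d.

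For the variance contribution, because the $\{\u_t\}_{t\in\mathcal{T}_m}$ are i.i.d. uniform on $\mS^n$ and $(\x_m,\lambda_m)$ is held fixed, the centered summands $\nabla_{t,\x}-\E[\nabla_{t,\x}\mid \x_m,\lambda_m]$ are conditionally independent with mean zero, so the expected squared norm of their sum equals the sum of the per-term second moments. Each per-term second moment is immediate from $\Vert\u_t\Vert=1$:
\begin{align*}
\Vert \nabla_{t,\x}\Vert^2 \;=\; \frac{n^2}{\mu^2}\brac{f_t(\z_t)+\lambda_m g_t^+(\z_t)}^2 \;\leq\; \brac{\frac{n(M_f+\lambda_m M_g)}{\mu}}^2,
\end{align*}
using the uniform bounds $|f_t|\leq M_f$ and $|g_t^+|\leq M_g$ over $R\ball$. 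Summing $K$ such terms produces $K\bigl(n(M_f+\lambda_m M_g)/\mu\bigr)^2$, which is the first summand in the claimed upper bound.

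For the bias contribution, Lemma \ref{lem:banditFunc} gives $\E[\nabla_{m,\x}\mid \x_m,\lambda_m] = \nabla_{\x}\widehat{\mathcal{L}}_m(\x_m,\lambda_m) = \sum_{t\in\mathcal{T}_m}\bigl(\nabla\widehat{f}_t(\x_m) + \lambda_m \nabla\widehat{g}_t^+(\x_m)\bigr)$, where $\widehat{f}_t,\widehat{g}_t^+$ are the $\mu$-smoothed versions. Since an average of $G$-Lipschitz functions is $G$-Lipschitz, $\widehat{f}_t$ remains $G_f$-Lipschitz and $\widehat{g}_t^+$ remains $G_g$-Lipschitz, so their gradients are bounded by $G_f$ and $G_g$ respectively. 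The triangle inequality then yields $\Vert\E[\nabla_{m,\x}\mid \x_m,\lambda_m]\Vert \leq K(G_f+\lambda_m G_g)$, whose square contributes exactly $K^2(G_f+\lambda_m G_g)^2$. Summing the variance and bias contributions proves the second inequality in the lemma, and Jensen's inequality applied to $x\mapsto x^2$ gives the first.

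I do not anticipate any real obstacle: the result is a routine bookkeeping exercise exploiting conditional independence of the $K$ single-round estimators within a block. The only subtle point is to treat $(\x_m,\lambda_m)$ as constants conditional on the filtration up to the start of block $m$, which is exactly what keeps the variance scaling linear rather than quadratic in $K$.
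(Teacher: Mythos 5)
Your proof is correct, and the only thing to note is that the paper itself gives no proof of this lemma: it is imported verbatim as Lemma 5 of the cited reference, where the argument is exactly the one you give — the bias--variance decomposition $\E[\|\nabla_{m,\x}\|^2]=\|\E[\nabla_{m,\x}]\|^2+\E[\|\nabla_{m,\x}-\E[\nabla_{m,\x}]\|^2]$ conditional on $(\x_m,\lambda_m)$, conditional independence of the $K$ single-round estimators to get the linear-in-$K$ variance term bounded via $\|\nabla_{t,\x}\|\leq n(M_f+\lambda_m M_g)/\mu$, and Lipschitzness of the smoothed surrogates to bound the mean term by $K(G_f+\lambda_m G_g)$. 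Your bookkeeping (variance bounded by the uncentered second moment, Jensen for the first inequality, $\lambda_m\geq 0$ so the bounds on $|f_t|$ and $g_t^+$ combine additively) is all sound.
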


The following lemma is analogous to Lemma \ref{lemma:LF} in the full-information setting.
\begin{lemma}\label{lemma:LFbandit}
Algorithm \ref{alg:LFbandit} guarantees that for every $\x \in \mK_{\mu/r}$ and $\lambda \in \reals_+$, 
\begin{align*}
\E\left[\sum_{t=1}^T{\mathcal{L}_t(\x_{m(t)},\lambda) - \mathcal{L}_t(\x,\lambda_{m(t)})}\right] &\leq
 \left({\frac{\sqrt{3\epsilon}\sqrt{K}nM_g}{\mu}+KG_g(\sqrt{3\epsilon}+\mu)}\right)\sum_{m=1}^{T/K}\E[\lambda_m] \\
 &~~+ \left({\frac{\eta{}Kn^2M_g^2}{\mu^2}+\eta{}K^2G_g^2 + K^2\delta^2\eta^3}\right)\sum_{m=1}^{T/K}\E[\lambda_m^2] \\
 &~~+ \frac{T}{K}\left({\frac{\sqrt{3\epsilon}\sqrt{K}nM_f}{\mu}+KG_f(\sqrt{3\epsilon}+\mu) }\right. \\
 &~~+ \left.{\eta\frac{Kn^2M_f^2}{\mu^2}+\eta{}K^2(G_f^2+M_g^2)}\right) + \frac{4R^2 + \lambda^2}{2\eta}.
\end{align*}
\end{lemma}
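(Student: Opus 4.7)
The plan is to follow the outline of the proof of Lemma \ref{lemma:LF} with two main modifications: we reason about the smoothed Lagrangians $\widehat{\mathcal{L}}_{t}$, for which Lemma \ref{lem:banditFunc} shows that $\nabla_{m,\x}$ and $\nabla_{m,\lambda}$ are unbiased stochastic gradients, and we apply Lemma \ref{lemma:CIP-FW} with respect to the shrunken set $\mK_{\mu/r}$. First, for every $\x\in\mK_{\mu/r}\subseteq R\ball$, Lemma \ref{lemma:CIP-FW} together with non-expansiveness of the Euclidean projection onto $R\ball$ gives $\enorm{\tilde{\y}_{m+1}-\x}^2 \leq \enorm{\tilde{\y}_m-\x}^2 - 2\eta\nabla_{m,\x}^\top(\tilde{\y}_m-\x) + \eta^2\enorm{\nabla_{m,\x}}^2$, and for every $\lambda\in\reals_+$, the update $\lambda_{m+1} = \Pi_{\reals_+}(\lambda_m+\eta\nabla_{m,\lambda})$ yields $(\lambda_{m+1}-\lambda)^2 \leq (\lambda_m-\lambda)^2 + 2\eta\nabla_{m,\lambda}(\lambda_m-\lambda) + \eta^2\nabla_{m,\lambda}^2$. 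Rearranging both and splitting $\nabla_{m,\x}^\top(\tilde{\y}_m-\x) = \nabla_{m,\x}^\top(\x_m-\x) + \nabla_{m,\x}^\top(\tilde{\y}_m-\x_m)$, where the second piece is bounded by $\enorm{\nabla_{m,\x}}\sqrt{3\epsilon}$ via Lemma \ref{lemma:CIP-FW}, produces the per-block inequality that will telescope.

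Next, I use convexity of $\widehat{\mathcal{L}}_t(\cdot,\lambda)$ in $\x$ and concavity in $\lambda$ (the latter because $\widehat{\mathcal{L}}_t$ is affine in $\lambda$ plus the concave $-\tfrac{\delta\eta}{2}\lambda^2$) to obtain the pointwise inequality $\widehat{\mathcal{L}}_m(\x_m,\lambda) - \widehat{\mathcal{L}}_m(\x,\lambda_m) \leq \nabla_{\x}\widehat{\mathcal{L}}_m(\x_m,\lambda_m)^\top(\x_m-\x) + \nabla_{\lambda}\widehat{\mathcal{L}}_m(\x_m,\lambda_m)(\lambda-\lambda_m)$. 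Since $\x_m,\lambda_m$ are $\mathcal{F}_m$-measurable where $\mathcal{F}_m$ is the history prior to block $m$, Lemma \ref{lem:banditFunc} supplies $\E[\nabla_{m,\x}\mid\mathcal{F}_m] = \nabla_{\x}\widehat{\mathcal{L}}_m(\x_m,\lambda_m)$ and $\E[\nabla_{m,\lambda}\mid\mathcal{F}_m] = \nabla_{\lambda}\widehat{\mathcal{L}}_m(\x_m,\lambda_m)$. Taking full expectation and using the tower property converts the RHS into $\E[\nabla_{m,\x}^\top(\x_m-\x)+\nabla_{m,\lambda}(\lambda-\lambda_m)]$, to which the per-block inequality applies. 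Summing over $m$ and telescoping (with $\tilde{\y}_1\in\mK\subseteq R\ball$, $\lambda_1=0$, and $\x\in R\ball$) produces the leading $\tfrac{4R^2+\lambda^2}{2\eta}$ term and residual moment and drift terms.

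To convert back from $\widehat{\mathcal{L}}$ to $\mathcal{L}$, I invoke the one-sided bound in Lemma \ref{lem:banditFunc}: $\widehat{\mathcal{L}}_m(\x_m,\lambda)\geq \mathcal{L}_m(\x_m,\lambda)$ and $\widehat{\mathcal{L}}_m(\x,\lambda_m) \leq \mathcal{L}_m(\x,\lambda_m) + \mu K(G_f+\lambda_m G_g)$ for $\x\in\mK_{\mu/r}$; summing the latter and taking expectation contributes the $\mu G_f T + \mu K G_g\sum_m\E[\lambda_m]$ pieces of the bound. For the moments, Lemma \ref{lem:banditMoment} combined with $(a+b\lambda_m)^2\leq 2a^2+2b^2\lambda_m^2$ cleanly splits $\tfrac{\eta}{2}\sum_m\E[\enorm{\nabla_{m,\x}}^2]$ into a $\lambda$-free part giving the constants $\eta n^2M_f^2T/\mu^2+\eta KG_f^2T$ and a part giving $\eta(Kn^2M_g^2/\mu^2+K^2G_g^2)\sum_m\E[\lambda_m^2]$, with the factor of two absorbed into the $\tfrac{1}{2}$ out front. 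For $\sqrt{3\epsilon}\sum_m\E[\enorm{\nabla_{m,\x}}]$, conditional Jensen together with $\sqrt{a+b}\leq\sqrt{a}+\sqrt{b}$ applied to Lemma \ref{lem:banditMoment} yields $\E[\enorm{\nabla_{m,\x}}\mid\mathcal{F}_m]\leq \sqrt{K}nM_f/\mu+KG_f+(\sqrt{K}nM_g/\mu+KG_g)\lambda_m$; multiplying by $\sqrt{3\epsilon}$ and summing produces the $\sqrt{3\epsilon}\sqrt{K}nM_g/\mu+KG_g\sqrt{3\epsilon}$ contribution to the coefficient of $\sum_m\E[\lambda_m]$ and the analogous $M_f,G_f$ contribution to the constant. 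Finally, $\nabla_{m,\lambda}^2\leq 2K\sum_{t\in\mathcal{T}_m}(g_t^+(\z_t))^2+2K^2\delta^2\eta^2\lambda_m^2\leq 2K^2M_g^2+2K^2\delta^2\eta^2\lambda_m^2$ (using $|g_t^+(\z_t)|\leq M_g$) contributes $\eta KM_g^2T$ to the constant and $K^2\delta^2\eta^3\sum_m\E[\lambda_m^2]$ to the $\lambda_m^2$ coefficient. Assembling all pieces yields the claimed bound.

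The hard part will be the bookkeeping: three distinct sources of $\lambda_m$-dependent contributions (the bias between $\widehat{\mathcal{L}}$ and $\mathcal{L}$, the second moment of the sphere-sampled $\nabla_{m,\x}$, and the second moment of $\nabla_{m,\lambda}$ with its $\delta\eta\lambda_m$ drift) must be arranged so the constant, $\E[\lambda_m]$, and $\E[\lambda_m^2]$ groupings match the lemma's statement, and the square-root splittings must interact correctly with the conditional Jensen step. Conceptually there is no new idea beyond the proof of Lemma \ref{lemma:LF}; the novel ingredient is the unbiasedness reduction through $\widehat{\mathcal{L}}$, which is precisely what allows the convexity/concavity-based telescoping argument to carry over from the full-information setting.
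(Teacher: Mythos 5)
Your proposal is correct and follows essentially the same route as the paper's proof: the same per-block projection/dual-update inequalities over $\mK_{\mu/r}$, the same convexity--concavity argument applied to the smoothed Lagrangians $\widehat{\mathcal{L}}_t$ with the unbiasedness of $\nabla_{m,\x},\nabla_{m,\lambda}$ from Lemma \ref{lem:banditFunc}, the same telescoping to $\tfrac{4R^2+\lambda^2}{2\eta}$, the same moment bounds via Lemma \ref{lem:banditMoment} with $(a+b)^2\le 2a^2+2b^2$ and $\sqrt{a^2+b^2}\le a+b$, and the same one-sided bias correction back to $\mathcal{L}_t$. The bookkeeping of the constant, $\E[\lambda_m]$, and $\E[\lambda_m^2]$ contributions as you describe it matches the paper's derivation.
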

\begin{proof}
Fix some block index $m$ in Algorithm \ref{alg:LFbandit}. 
Using Lemma \ref{lemma:CIP-FW} and the facts that $\y_{m+1} = \Pi_{R\ball}\brac{\tilde{\y}_m - \eta \nabla_{m,\x}}$ and $\mK_{\mu/r}\subseteq\mK \subseteq R\ball$, we have that for every $\x \in \mK_{\mu/r}$ it holds that
\begin{align*}
    \enorm{ \tilde{\y}_{m+1} - \x}^2 \leq  \enorm{ \y_{m+1}- \x}^2 \leq \enorm{\tilde{\y}_m -\eta \nabla_{m,\x}-\x}^2  = \enorm{\tilde{\y}_m-\x}^2 -2\eta \nabla_{m,\x}^\top \brac{\tilde{\y}_m - \x} + \eta^2 \enorm{\nabla_{m,\x}}^2,
\end{align*}
and rearranging, we have that
\begin{align}
    \nabla_{m, \x}^\top \brac{ \tilde{\y}_m - \x}  & \leq \frac{1}{2\eta} \brac{ \enorm{ \tilde{\y}_m - \x}^2 - \enorm{\tilde{\y}_{m+1} -\x}^2} + \frac{\eta}{2} \enorm{\nabla_{m,\x}}^2. \label{eq:bound_grad_x:bandit}
\end{align}
Since $\lambda_{m+1} = \Pi_{\reals_{+}} \brac{\lambda_m  + \eta \nabla_{m,\lambda}}$, for every $\lambda \in \reals_+$ it holds that
\begin{align*}
    \brac{\lambda_{m+1}-\lambda}^2 \leq \brac{ \lambda_m + \eta \nabla_{m,\lambda} - \lambda}^2 = \brac{\lambda_{m}- \lambda}^2 + 2 \eta \nabla_{m ,\lambda} \brac{ \lambda_m- \lambda } +\eta^2 \nabla_{m,\lambda}^2,
\end{align*}
and rearranging, we have that  
\begin{align}
     \nabla_{m, \lambda} \brac{ \lambda - \lambda_m}\leq \frac{1}{2\eta} \brac{\brac{ \lambda_{m} - \lambda}^2 -  \brac{ \lambda_{m+1} - \lambda }^2} +\frac{\eta}{2} \nabla_{m,\lambda}^2. \label{eq:bound_grad_lambda:bandit}
\end{align}
Let us now fix some $t\in[T]$.
Since $\widehat{\mathcal{L}}_t\brac{\cdot,\lambda}$ is convex, for all $\x\in\mK_{\mu/r}$ it holds that,
\begin{align*}
    \widehat{\mathcal{L}}_t \brac{\x_{m(t)},\lambda_{m(t)}} - \widehat{\mathcal{L}}_t \brac{\x,\lambda_{m(t)}} \leq \nabla_{\x}\widehat{\mathcal{L}}_t\brac{\x_{m(t)},\lambda_{m(t)}}^\top \brac{\x_{m(t)} - \x},
\end{align*}
and since $\widehat{\mathcal{L}}_t\brac{\x_{m(t)} , \cdot}$ is concave, for all $\lambda\in\reals_+$ it holds that,
\begin{align*}
    \widehat{\mathcal{L}}_t \brac{\x_{m(t)},\lambda} - \widehat{\mathcal{L}}_t \brac{\x_{m(t)},\lambda_{m(t)}} \leq \nabla_{\lambda}\widehat{\mathcal{L}}_t\brac{\x_{m(t)},\lambda_{m(t)}} \brac{\lambda-\lambda_{m(t)} }.
\end{align*}
Combining the last two inequalities we have that for any $\x\in\mK_{\mu/r}$ and $\lambda\in\reals^+$ it holds that,
\begin{align*}
    \widehat{\mathcal{L}}_t \brac{\x_{m(t)},\lambda}  - \widehat{\mathcal{L}}_t \brac{\x,\lambda_{m(t)}} \leq \nabla_{\x}\widehat{\mathcal{L}}_t\brac{\x_{m(t)},\lambda_{m(t)}}^\top \brac{\x_{m(t)} - \x} + \nabla_{\lambda}\widehat{\mathcal{L}}_t\brac{\x_{m(t)},\lambda_{m(t)}} \brac{\lambda-\lambda_m }.
\end{align*}

Let us introduce the notation $\E_m[\cdot] = \E_{\{\u_t\}_{t\in\mathcal{T}_m}}[\cdot]$, where we recall that $\mathcal{T}_m = \{(m-1)K+1,\dots,mK\}$. That is, $\E_m[\cdot]$ denotes the expectation w.r.t. all randomness introduces by the random vectors $\{\u_t\}_{t\in\mathcal{T}_m}$ on some block $m$ during the run of the algorithm. Let us also recall that according to Lemma \ref{lem:banditFunc}, $\E_m[\nabla_{m,\x}] = \nabla_{\x}\widehat{\mathcal{L}}_m(\x_m,\lambda_m) = \sum_{t\in\mathcal{T}_m}\nabla_{\x}\widehat{\mathcal{L}}_t(\x_m,\lambda_m)$ and $\E_m[\nabla_{m,\lambda}] = \nabla_{\lambda}\widehat{\mathcal{L}}_m(\x_m,\lambda_m) = \sum_{t\in\mathcal{T}_m}\nabla_{\lambda}\widehat{\mathcal{L}}_t(\x_m,\lambda_m)$. 

Summing the above inequality over $t\in\mathcal{T}_m$, we have that for any $\x\in\mK_{\mu/r}$ and $\lambda\in\reals^+$ it holds that,
\begin{align*}
    \sum_{t\in\mathcal{T}_m} \widehat{\mathcal{L}}_t \brac{\x_m ,\lambda}  - \widehat{\mathcal{L}}_t \brac{\x,\lambda_m} & \leq \E_m[\nabla_{m ,\x}]^\top \brac{\x_m - \x} + \E_m[\nabla_{m ,\lambda}] \brac{\lambda-\lambda_m }\\
    & = \E_m[\nabla_{m ,\x}]^\top \brac{\x_m - \tilde{\y}_m} + \E_m[\nabla_{m ,\x}]^\top \brac{\tilde{\y}_m - \x} + \E_m[\nabla_{m ,\lambda}] \brac{\lambda-\lambda_m }.
\end{align*}
Combining the last inequality with Eq. \eqref{eq:bound_grad_x:bandit} and \eqref{eq:bound_grad_lambda:bandit}, and noting that $\x_m,\y_m,\lambda_m$ are independent of the random vectors $\{\u_t\}_{t\in\mathcal{T}_m}$, we have that for every $\x \in \mK_{\mu/r}$ and $\lambda \in \reals_+$ it holds that,
\begin{align*}
    \sum_{t\in\mathcal{T}_m} \widehat{\mathcal{L}}_t \brac{\x_m ,\lambda}  - \widehat{\mathcal{L}}_t \brac{\x,\lambda_m} & \leq 
    \E_m[\nabla_{m ,\x}^\top \brac{\x_m - \tilde{\y}_m}]\\
    &~~~ + \frac{1}{2\eta} \E_m\left[{ \enorm{\tilde{\y}_m - \x}^2 - \enorm{\tilde{\y}_{m+1}- \x}^2  }\right] + \frac{\eta}{2}\E_m[ \enorm{\nabla_{m,\x}}^2] \\
    &~~~+ \frac{1}{2\eta}\E_m\left[{\brac{\lambda_{m} - \lambda}^2 -  \brac{ \lambda_{m+1} - \lambda}^2 }\right] + \frac{\eta}{2}\E_m[\nabla_{m,\lambda}^2].
\end{align*}
Summing the above over $m\in\{1,\dots,T/K\}$ and taking expectation w.r.t. all random variables $\u_1,\dots,\u_T$, we have for every $\x \in \mK_{\mu/r}$ and $\lambda \in \reals_+$ that,
\begin{align*}
    \E\left[{\sum_{m=1}^{T/K} \sum_{t\in\mathcal{T}_m} \widehat{\mathcal{L}}_t \brac{\x_m,\lambda}  - \widehat{\mathcal{L}}_t \brac{\x,\lambda_m}}\right] & \leq  \sum_{m=1}^{T/K}\E\left[{ \enorm{\nabla_{m ,\x}} \enorm{\x_m - \tilde{\y}_m}}\right]  \\
    &~+  \frac{\eta}{2} \sum_{m=1}^{T/K}\brac{\E[\enorm{\nabla_{m,\x}}^2]+\E[\nabla_{m,\lambda}^2] }  
     + \frac{\enorm{\tilde{\y}_1 - \x}^2 + \brac{ \lambda_{1} - \lambda}^2}{2\eta} .
\end{align*}
Using Lemma \ref{lemma:CIP-FW} we have that $\enorm{\x_m -\tilde{\y}_m}^2\leq 3\epsilon$ for every $m$. Also, since $\tilde{\y}_1 \in \mK_{\mu/r}\subseteq{}R\ball$ and $\lambda_1 = 0$, for every $\x \in \mK_{\mu/r}$ and  $\lambda \in \reals_+$ we have  that, 
\begin{align}
   \E\left[{ \sum_{t=1}^T\widehat{\mathcal{L}}_t \brac{\x_{m(t)},\lambda}  - \widehat{\mathcal{L}}_t \brac{\x,\lambda_{m(t)}}}\right] & \leq \sqrt{3\epsilon} \sum_{m=1}^{T/K} \E[\enorm{\nabla_{m ,\x}}]  + \frac{4R^2 + \lambda^2}{2\eta} \nonumber\\
   &~ +  \frac{\eta}{2} \sum_{m=1}^{T/K}\brac{\E[\enorm{\nabla_{m,\x}}^2]+\E[\nabla_{m,\lambda}^2] } .\label{eq:lagrangian_regret_1:bandit}
\end{align}
Using Lemma \ref{lem:banditMoment} we have that for every $m\in[T/K]$,
\begin{align*}
\E_m[\Vert{\nabla_{m,\x}}\Vert]^2 &\leq \E_m[\Vert{\nabla_{m,\x}}\Vert^2] \leq K\left({\frac{n(M_f+\lambda_m{}M_g)}{\mu}}\right)^2 + K^2(G_f+\lambda_m{}G_g)^2.
\end{align*}
Recalling also that $\nabla_{m,\lambda} = \sum_{t\in\mathcal{T}_m}\left({ g_t^+(\z_t) -\delta \eta \lambda_m}\right)$, we obtain
\begin{align}\label{eq:lagrangeBandit:lem:1}
    \E\left[{\sum_{t=1}^{T} \widehat{\mathcal{L}}_t \brac{\x_{m(t)},\lambda}  - \widehat{\mathcal{L}}_t \brac{\x,\lambda_{m(t)}}}\right] & \leq  
\sqrt{3\epsilon}\sum_{m=1}^{T/K}\E\left[{\sqrt{K\left({\frac{n(M_f+\lambda_m{}M_g)}{\mu}}\right)^2 + K^2(G_f+\lambda_m{}G_g)^2}}\right] \nonumber \\
&~~+   \frac{\eta}{2}\sum_{m=1}^{T/K}\E\left[{K\left({\frac{n(M_f+\lambda_m{}M_g)}{\mu}}\right)^2 + K^2(G_f+\lambda_m{}G_g)^2}\right] \nonumber\\
&~~+\frac{\eta}{2}\sum_{m=1}^{T/K}\E\left[{\left({\sum_{t\in\mathcal{T}_m}(g_t^+(\z_t) - \delta\eta\lambda_m)}\right)^2}\right]
+ \frac{4R^2 + \lambda^2}{2\eta} \nonumber\\
&\leq \sqrt{3\epsilon}\sum_{m=1}^{T/K}\E\left[{\sqrt{K}\frac{n(M_f + \lambda_mM_g)}{\mu} + K(G_f+\lambda_mG_g)}\right] \nonumber \\
&~~+\frac{\eta}{2}\sum_{m=1}^{T/K}\E\left[{K\left({\frac{n(M_f+\lambda_m{}M_g)}{\mu}}\right)^2 + K^2(G_f+\lambda_m{}G_g)^2}\right] \nonumber\\
&~~+\eta\sum_{m=1}^{T/K}\E\left[{\left({\sum_{t\in\mathcal{T}_m}g_t^+(\z_t)}\right)^2 + K^2\delta^2\eta^2\lambda_m^2}\right]
+ \frac{4R^2 + \lambda^2}{2\eta}, 
\end{align}
where in the last inequality we have used the facts that $(a+b)^2 \leq 2a^2+2b^2$ for any $a,b\in\reals$, and $\sqrt{a^2+b^2} \leq a+b$ for any $a,b\in\reals_+$.

Rearranging the RHS in the above using the fact that  $g_t^+(\z_t) \leq M_g$ for all $t\in[T]$, and also using again the inequaity $(a+b)^2 \leq 2a^2 + 2b^2$ gives,
\begin{align*}
\textrm{RHS of } \eqref{eq:lagrangeBandit:lem:1} &\leq
 \sqrt{3\epsilon}\left({\frac{\sqrt{K}nM_g}{\mu}+KG_g}\right)\sum_{m=1}^{T/K}\E[\lambda_m]
 + \left({\frac{\eta{}Kn^2M_g^2}{\mu^2}+\eta{}K^2G_g^2+  K^2\delta^2\eta^3}\right)\sum_{m=1}^{T/K}\E[\lambda_m^2] \\
 &~~+ \frac{T}{K}\left({\frac{\sqrt{3\epsilon}\sqrt{K}nM_f}{\mu}+\sqrt{3\epsilon}KG_f + \eta\frac{Kn^2M_f^2}{\mu^2}+\eta{}K^2(G_f^2+M_g^2)}\right) + \frac{4R^2 + \lambda^2}{2\eta}.
\end{align*}

Finally, using Lemma \ref{lem:banditFunc}, for any $\x\in\mK_{\mu/r}$ and $\lambda\in\reals_+$ we have that,
\begin{align*}
\E\left[\sum_{t=1}^T{\mathcal{L}_t(\x_{m(t)},\lambda) - \mathcal{L}_t(\x,\lambda_{m(t)})}\right] &\leq
 \left({\frac{\sqrt{3\epsilon}\sqrt{K}nM_g}{\mu}+KG_g(\sqrt{3\epsilon}+\mu)}\right)\sum_{m=1}^{T/K}\E[\lambda_m] \\
 &~~+ \left({\frac{\eta{}Kn^2M_g^2}{\mu^2}+\eta{}K^2G_g^2 + K^2\delta^2\eta^3}\right)\sum_{m=1}^{T/K}\E[\lambda_m^2] \\
 &~~+ \frac{T}{K}\left({\frac{\sqrt{3\epsilon}\sqrt{K}nM_f}{\mu}+KG_f(\sqrt{3\epsilon}+\mu) }\right. \\
 &~~+ \left.{\eta\frac{Kn^2M_f^2}{\mu^2}+\eta{}K^2(G_f^2+M_g^2)}\right) + \frac{4R^2 + \lambda^2}{2\eta}.
\end{align*}


%

\end{proof}

\begin{proof}[Proof of Theorem \ref{thm:bandit}]
From Lemma \ref{lemma:LF} we have  for every $\x \in \mK_{\mu/r}$ and $\lambda \in \reals_+$ that,
\begin{align*}
   & \E\left[{\sum_{t=1}^{T}  \brac{f_t(\x_{m(t)}) -f_t(\x)} + \sum_{t=1}^{T} \brac{\lambda g_t^+(\x_{m(t)}) - \lambda_{m(t)} g_t^+(\x) } + \frac{\delta \eta K }{2} \sum_{m=1}^{T/K} \lambda_m^2 - \frac{\delta \eta }{2} T \lambda^2}\right]\\
   &~~\leq \left({\frac{\sqrt{3\epsilon}\sqrt{K}nM_g}{\mu}+KG_g(\sqrt{3\epsilon}+\mu)}\right)\sum_{m=1}^{T/K}\E[\lambda_m] + \left({\frac{\eta{}Kn^2M_g^2}{\mu^2}+\eta{}K^2G_g^2 + K^2\delta^2\eta^3}\right)\sum_{m=1}^{T/K}\E[\lambda_m^2] \\
 &~~+ \frac{T}{K}\left({\frac{\sqrt{3\epsilon}\sqrt{K}nM_f}{\mu}+\sqrt{3\epsilon}KG_f + \eta\frac{Kn^2M_f^2}{\mu^2}+\eta{}K^2(G_f^2+M_g^2) + \mu{}K{}G_f}\right)+ \frac{4R^2 + \lambda^2}{2\eta}.
 \end{align*}

Rearranging, we have that for every $\x \in \mK_{\mu/r}$ and $\lambda \in \reals_+$ it holds that,
\begin{align*}
    &\E\left[{ \sum_{t=1}^{T} \brac{f_t(\x_{m(t)}) -f_t(\x)} + \sum_{t=1}^{T}\lambda g_t^+(\x_{m(t)}) }\right] - \brac{\frac{\delta \eta }{2} T + \frac{1}{2\eta}} \lambda^2   \\ 
    & ~~~ \leq  \frac{2R^2}{\eta}   + \left({\frac{\sqrt{3\epsilon}\sqrt{K}nM_g}{\mu}+KG_g(\sqrt{3\epsilon}+\mu)}\right)\sum_{m=1}^{T/K}\E[\lambda_m] \\
    &~~~+  \left({\frac{\eta{}Kn^2M_g^2}{\mu^2}+\eta{}K^2G_g^2 + K^2\delta^2\eta^3 - \frac{\delta\eta{}K}{2}}\right)\sum_{m=1}^{T/K}\E[\lambda_m^2] \\
    &~~~+T\left({\frac{\sqrt{3\epsilon}nM_f}{\mu\sqrt{K}}+\sqrt{3\epsilon}G_f + \eta\frac{n^2M_f^2}{\mu^2}+\eta{}K(G_f^2+M_g^2) + \mu{}G_f}\right) + \sum_{t=1}^Tg_t^+(\x)\E[\lambda_{m(t)}]. 
\end{align*}

Fix some $\x^*\in\widehat{\mK}$ such that $\x^*\in\arg\min_{\w\in\widehat{\mK}}\sum_{t=1}^Tf_t(\w)$ and from now on fix $\x=(1-\mu/r)\x^*\in\mK_{\mu/r}$. Note that for all $t\in[T]$, using the convexity of $g_t^+$  and the fact that $g_t^+(\x^*) = 0$ we have that,
\begin{align*}
g_t^+(\x) = g_t^+((1-\mu/r)\x^* + (\mu/r)\mathbf{0}) \leq (1-\mu/r)g_t^+(\x^*) + (\mu/r)g_t^+(\mathbf{0}) \leq \frac{\mu}{r}M_g. 
\end{align*}
This gives,
\begin{align*}
    &\E\left[{ \sum_{t=1}^{T} \brac{f_t(\x_{m(t)}) -f_t(\x)} + \sum_{t=1}^{T}\lambda g_t^+(\x_{m(t)}) }\right] - \brac{\frac{\delta \eta }{2} T + \frac{1}{2\eta}} \lambda^2   \\ 
    & ~~~ \leq  \frac{2R^2}{\eta}   + \left({\frac{\sqrt{3\epsilon}\sqrt{K}nM_g}{\mu}+KG_g(\sqrt{3\epsilon}+\mu) + \frac{\mu{}KM_g}{r}}\right)\sum_{m=1}^{T/K}\E[\lambda_m] \\
    &~~~+  \left({\frac{\eta{}Kn^2M_g^2}{\mu^2}+\eta{}K^2G_g^2 + K^2\delta^2\eta^3 - \frac{\delta\eta{}K}{2}}\right)\sum_{m=1}^{T/K}\E[\lambda_m^2] \\
    &~~~+T\left({\frac{\sqrt{3\epsilon}nM_f}{\mu\sqrt{K}}+\sqrt{3\epsilon}G_f + \eta\frac{n^2M_f^2}{\mu^2}+\eta{}K(G_f^2+M_g^2) + \mu{}G_f}\right).
\end{align*}

Suppose throughout the rest of the proof that the following condition holds:
\begin{align*}
\frac{\delta\eta{}K}{2} &\geq \frac{\eta{}Kn^2M_g^2}{\mu^2}+\eta{}K^2G_g^2 + K^2\delta^2\eta^3 + \frac{\sqrt{3\epsilon}\sqrt{K}nM_g}{\mu}+KG_g(\sqrt{3\epsilon}+\mu) + \frac{\mu{}KM_g}{r}\\
&=\eta{}K\left({\frac{n^2M_g^2}{\mu^2} + KG_g^2 +\eta^2K\delta^2}\right) + \sqrt{3\epsilon{}K}\left({\frac{nM_g}{\mu} + \sqrt{K}G_g}\right) + 
K\mu\left({G_g + \frac{M_g}{r}}\right).
\end{align*}
In particular, for the above to hold it suffices that the following holds:
\begin{align}\label{eq:thm:bandit:1}
\delta \geq  12\max\left\{\frac{n^2M_g^2}{\mu^2}, KG_g^2, K\delta^2\eta^2, \frac{\sqrt{3\epsilon}nM_g}{\eta\mu\sqrt{K}}, \frac{G_g(\sqrt{3\epsilon}+\mu)}{\eta}, \frac{\mu{}M_g}{r\eta}\right\}.
\end{align}

Then we can write,
\begin{align*}
    &\E\left[{ \sum_{t=1}^{T} \brac{f_t(\x_{m(t)}) -f_t(\x)} + \sum_{t=1}^{T}\lambda g_t^+(\x_{m(t)}) }\right] - \brac{\frac{\delta \eta }{2} T + \frac{1}{2\eta}} \lambda^2   \\ 
    & ~~~ \leq  \frac{2R^2}{\eta}   + \left({\frac{\sqrt{3\epsilon}\sqrt{K}nM_g}{\mu}+KG_g(\sqrt{3\epsilon}+\mu) + \frac{\mu{}KM_g}{r}}\right)\sum_{m=1}^{T/K}\E[\lambda_m - \lambda_m^2] \\
    &~~~+T\left({\frac{\sqrt{3\epsilon}nM_f}{\mu\sqrt{K}}+\sqrt{3\epsilon}G_f + \eta\frac{n^2M_f^2}{\mu^2}+\eta{}K(G_f^2+M_g^2) + \mu{}G_f}\right).
\end{align*}

Since $z-z^2 \leq \frac{1}{4}$ for every $z\in\reals$, it holds that
\begin{align}\label{eq:bandit:thm:1}
    &\E\left[{ \sum_{t=1}^{T} \brac{f_t(\x_{m(t)}) -f_t(\x)} + \sum_{t=1}^{T}\lambda g_t^+(\x_{m(t)}) }\right] - \brac{\frac{\delta \eta }{2} T + \frac{1}{2\eta}} \lambda^2 \leq  \frac{2R^2}{\eta} \nonumber  \\ 
    & ~~~    + \frac{T}{4}\left({\frac{\sqrt{3\epsilon}n(M_g+4M_f)}{\mu\sqrt{K}}+G_g(\sqrt{3\epsilon}+\mu) + \frac{\mu{}M_g}{r} + 4\sqrt{3\epsilon}G_f + 4\eta\frac{n^2M_f^2}{\mu^2}+4\eta{}K(G_f^2+M_g^2) + 4\mu{}G_f}\right) .
\end{align}
Note that $\lambda^* = \frac{ \sum_{t=1}^{T}\E[  g_t^+(\x_{m(t)})]}{\delta \eta  T  + \eta^{-1}}$ maximizes the term $\lambda\sum_{t=1}^{T}\E[  g_t^+(\x_{m(t)})] - \brac{\frac{\delta \eta }{2} T  + \frac{1}{2\eta}} \lambda^2$ in the above inequality as a function of $\lambda$. Plugging-in $\x=(1-\mu/r)\x^*$ and $\lambda^*$ into the last inequality, we have that
\begin{align}\label{eq:bandit:thm:2}
    \E\left[{\sum_{t=1}^{T} \brac{f_t(\x_{m(t)}) -f_t((1-\mu/r)\x^*)}}\right] + \frac{1}{2} \frac{ \brac{\sum_{t=1}^{T}  \E[g_t^+(\x_{m(t)})]}^2}{\delta \eta  T  + \eta^{-1}} \leq \textrm{RHS of } \eqref{eq:bandit:thm:1}.
\end{align}

Suppose $\eta = c_1T^{-3/4}, K=T^{1/2}, \epsilon=61R^2T^{-1/2}\log{T}, \delta = c_2T^{1/2}\sqrt{\log{T}}$ and $\mu=c_3T^{-1/4}$ for some constants $c_1,c_2,c_3$. This gives
\begin{align*}
\textrm{RHS of } \eqref{eq:bandit:thm:1} &= 
O\left({\frac{R^2T^{3/4}}{c_1} + T\left({\frac{RT^{-1/4}\sqrt{\log{T}}n(M_g+M_f)}{c_3} + G_g\left({RT^{-1/4}\sqrt{\log{T}} + c_3T^{-1/4}}\right)  }\right.}\right. \\
&\left.{\left.{  +\frac{c_3T^{-1/4}M_g}{r} + G_fRT^{-1/4}\sqrt{\log{T}} + \frac{c_1T^{-1/4}n^2M_f^2}{c_3^2} +c_1T^{-1/4}(G_f^2+M_g^2) + c_3G_fT^{-1/4} }\right)}\right) \\
&=O\left({T^{3/4}\sqrt{\log{T}}\left({\frac{nR(M_g+M_f)}{c_3} + R(G_g+G_f)}\right) }\right.\\
&~~~\left.{+T^{3/4}\left({\frac{R^2}{c_1} + c_3\left({G_f+G_g+\frac{M_g}{r}}\right)+\frac{c_1n^2M_f^2}{c_3^2} + c_1(G_f^2+M_g^2)}\right)}\right).
\end{align*}

Setting $c_1 = \frac{R}{\sqrt{n}\max\{G_f,G_g,M_f,M_g\}}$, $c_3 = \sqrt{nr}$, we obtain
\begin{align*}
\textrm{RHS of } \eqref{eq:bandit:thm:1} &= O\Bigg({T^{3/4}\sqrt{\log{T}}\left({\sqrt{\frac{n}{r}}R(M_g+M_f) + R(G_g+G_f)}\right) }\Bigg.\\
&~~~ \Bigg.{+ T^{3/4}\left({\sqrt{n}\max\{R/r, R,1/\sqrt{r}\}\max\{G_f,G_g,M_f,M_g\}}\right)}\Bigg) \\
&= O\left({\sqrt{n}\max\{G_f,G_g,M_f,M_g\}T^{3/4}\left({\max\{R/r,R,1/\sqrt{r}\} + R(1/\sqrt{n}+1/\sqrt{r})\sqrt{\log{T}}}\right)}\right).
\end{align*}

Note that for all $t\in[T]$, using the convexity of $f_t$, we have that
\begin{align*}
&f_t(\z_t) = f_t(\x_{m(t)}+\mu\u_t) \leq f_t(\x_{m(t)}) + \mu{}G_f, \quad f_t((1-\mu/r)\x^*)  \leq f_t(\x^*)+  \frac{\mu{}G_f}{r}.
\end{align*}
Plugging these into \eqref{eq:bandit:thm:2} yields the regret bound:
\begin{align*}
    &\E\left[{\sum_{t=1}^{T} f_t(\z_t) -f_t(\x^*)}\right] \leq \textrm{RHS of } \eqref{eq:bandit:thm:1} + T\mu{}G_f\left({1 + \frac{1}{r}}\right)\\
    &=  O\left({\sqrt{n}\max\{G_f,G_g,M_f,M_g\}T^{3/4}\left({\max\{R/r,R,1/\sqrt{r}\} + R(1/\sqrt{n}+1/\sqrt{r})\sqrt{\log{T}}}\right)}\right).
\end{align*}

Using the fact that for all $t\in[T]$, $f_t((1-\mu/r)\x^*) - f_t(\x_{m(t)}) \leq 2RG_f$ and  $g_t^+(\z_t) \leq g_t^+(\x_{m(t)}) + \mu{}G_g$, we obtain from \eqref{eq:bandit:thm:2} the constraints violation bound:
\begin{align}\label{eq:thm:bandit:2}
    \E\left[{\sum_{t=1}^{T}  g_t^+(\z_t)}\right] \leq \sqrt{2\brac{\delta \eta  T  + \eta^{-1}} \times \left({\textrm{RHS of }  \eqref{eq:bandit:thm:1} + 2RG_fT}\right)} + \mu{}G_gT.
\end{align}
Let us now get back to the condition in \eqref{eq:thm:bandit:1} which we assumed holds true. Plugging-in our choices of $\eta,\mu,\epsilon,K,\delta$, it suffices that the constant $c_2$ satisfies:
\begin{align*}
c_2 \geq  \frac{12T^{-1/2}}{\sqrt{\log{T}}}\max&\left\{{\frac{nM_g^2T^{1/2}}{r}, G_g^2T^{1/2}, \frac{c_2^2R^2}{n\max\{G_f,G_g,M_f,M_g\}^2}\log{T}, 
}\right.\\
&~~ \left.{\frac{\sqrt{183}n\max\{G_f,G_g,M_f,M_g\}^2T^{1/2}\sqrt{\log{T}}}{\sqrt{r}}, }\right.\\
&~~\left.{ \frac{\sqrt{n}\max\{G_f,G_g,M_f,M_g\}^2T^{1/2}(\sqrt{183}R\sqrt{\log{T}}+\sqrt{nr})}{R}, }\right.\\
&~~\left.{\frac{nT^{1/2}\max\{G_f,G_g,M_f,M_g\}^2}{\sqrt{r}R}}\right\}.
\end{align*}
In particular, for any $T$ large enough (so that the second term inside the max in the RHS is not dominant), it suffices to take 
\begin{align*}
c_2 
&= 24\sqrt{183}\max\{G_f,G_g,M_f,M_g\}^2\max\{\frac{n}{r}, \frac{n}{\sqrt{r}}, \sqrt{n}, \frac{n}{\sqrt{r}R}\}.
\end{align*}
Plugging this choice into \eqref{eq:thm:bandit:2}, and noting that the RHS of   \eqref{eq:bandit:thm:1} scales (as a function of $T$) only as $T^{3/4}\sqrt{\log{T}}$,  we have that for any $T$ large enough,
\begin{align*}
 \E\left[{\sum_{t=1}^{T}  g_t^+(\z_t)}\right] = O\left({\sqrt{RG_f\max\{G_f,G_g,M_f,M_g\}\max\{\frac{R}{r}, \frac{R}{\sqrt{r}},\frac{R}{\sqrt{n}}, \frac{1}{\sqrt{r}}, \frac{1}{R}\}}n^{1/4}T^{7/8}{\log(T)^{1/4}}}\right).
\end{align*}

Finally, since $\epsilon,K$ are the same as in Theorem  \ref{thm:LF}, we have that the same upper-bound on the total number of calls to the linear optimization oracle applies, which concludes the proof.
\end{proof}




\end{document}